\title{Near-Optimal Data Source Selection for Bayesian Learning}
\author{%
Lintao Ye%
\thanks{Department of Electrical Engineering, University of Notre Dame; \texttt{lye2@nd.edu}.}
\and
Aritra Mitra%
\thanks{Department of Electrical and Systems Engineering, University of Pennsylvania; \texttt{amitra20@seas.upenn.edu}.}
\and
Shreyas Sundaram%
\thanks{School of Electrical and Computer Engineering, Purdue University; \texttt{sundara2@purdue.edu}.}
}
\newtheorem{problem}{Problem}
\newtheorem{theorem}{Theorem}
\newtheorem{lemma}[theorem]{Lemma}
\newtheorem{remark}[theorem]{Remark}
\newtheorem{definition}[theorem]{Definition}
\newtheorem{assumption}{Assumption}
	\let\Cref\crtCref
	\let\cref\crtcref
\algnewcommand{\IfThenElse}[3]{%
  \State \algorithmicif\ #1\ \algorithmicthen\ #2\ \algorithmicelse\ #3}
\begin{document}

\maketitle

\begin{abstract}
We study a fundamental problem in Bayesian learning, where the goal is to select a set of data sources with minimum cost while achieving a certain learning performance based on the data streams provided by the selected data sources. First, we show that the data source selection problem for Bayesian learning is NP-hard.  We then show that the data source selection problem can be transformed into an instance of the submodular set covering problem studied in the literature, and provide a standard greedy algorithm to solve the data source selection problem with provable performance guarantees. Next, we propose a fast greedy algorithm that improves the running times of the standard greedy algorithm, while achieving performance guarantees that are comparable to those of the standard greedy algorithm. The fast greedy algorithm can also be applied to solve the general submodular set covering problem with performance guarantees. Finally, we validate the theoretical results using numerical examples, and show that the greedy algorithms work well in practice.

\noindent\textbf{Keywords:} Bayesian Learning, Combinatorial Optimization, Approximation Algorithms, Greedy Algorithms
\end{abstract}

\section{Introduction}
The problem of learning the true state of the world based on streams of data has been studied by researchers from different fields. A classical method to tackle this task is Bayesian learning, where we start with a prior belief about the true state of the world and update our belief based on the data streams from the data sources (e.g., \cite{gelman2013bayesian}). In particular, the data streams can come from a variety of sources, including experiment outcomes \cite{chaloner1995bayesian}, medical tests \cite{kononenko1993inductive}, and sensor measurements \cite{krause2008near}, etc. In practice, we need to pay a cost in order to obtain the data streams from the data sources; for example, conducting certain experiments or installing a particular sensor incurs some cost that depends on the nature of the corresponding data source. Thus, a fundamental problem that arises in Bayesian learning is to select a subset of data sources with the smallest total cost, while ensuring a certain level of the learning performance based on the data streams provided by the selected data sources.

In this paper, we focus on a standard Bayesian learning rule that updates the belief on the true state of the world recursively based on the data streams. The learning performance is then characterized by an error given by the difference between the steady-state belief obtained from the learning rule and the true state of the world. Moreover, we consider the scenario where the data sources are selected a priori before running the Bayesian learning rule, and the set of selected data sources is fixed over time. We then formulate and study the Bayesian Learning Data Source Selection (BLDS) problem, where the goal is to minimize the cost spent on the selected data sources while ensuring that the error of the learning process is within a prescribed range. 

\subsection{Related Work}
In \cite{dasgupta2005analysis} and \cite{golovin2010near}, the authors studied the data source selection problem for Bayesian active learning. They considered the scenario  where the data sources are selected in a sequential manner with a single data source  selected at each time step in the learning process. The goal is then to find a policy on sequentially selecting the data sources with minimum cost, while the true state of the world can be identified based on the selected data sources. In contrast, we consider the scenario where a subset of data sources are selected a priori. Moreover, the selected data sources may not necessarily lead to the learning of the true state of the world. Thus, we characterize the performance of the learning process via its steady-state error.

The problem studied in this paper is also related but different from the problem of ensuring sparsity in learning, where the goal is to identify the fewest number of features in order to explain a {\it given} set of data \cite{palmer2004perspectives,krause2010submodular}.

Finally, our problem formulation is also related to the sensor placement problem that has been studied for control systems (e.g., \cite{mo2011sensor} and \cite{ye2018complexity}), signal processing (e.g., \cite{chepuri2014sparsity} and \cite{ye2019sensor}), and machine learning (e.g., \cite{krause2008near}).  In general, the goal of these problems is either to optimize certain (problem-specific) performance metrics of the estimate associated with the measurements of the placed sensors while satisfying the sensor placement budget constraint, or to minimize the cost spent on the placed sensors while ensuring that the estimation performance is within a certain range.

\subsection{Contributions}
First, we formulate the Bayesian Learning Data Source Selection (BLDS) problem, and show that the BLDS problem is NP-hard. Next, we show that the BLDS problem can be transformed into an instance of the submodular set covering problem studied in \cite{wolsey1982analysis}. The BLDS problem can then be solved using a standard greedy algorithm with approximation (i.e., performance) guarantees, where the query complexity of the greedy algorithm is $O(n^2)$, with $n$ to be the number of all candidate data sources. In order to improve the running times of the greedy algorithm, we further propose a fast greedy algorithm with query complexity $O(\frac{n}{\epsilon}\ln\frac{n}{\epsilon})$, where $\epsilon\in(0,1)$. The fast greedy algorithm achieves comparable performance guarantees to those of the standard greedy algorithm, and can also be applied to solve the general submodular set covering problem with performance guarantees. Finally, we provide illustrative examples to interpret the performance bounds obtained for the greedy algorithms applied to the BLDS problem, and give simulation results.

\subsection{Notation and Terminology}
The sets of integers and real numbers are denoted as $\mathbb{Z}$ and $\mathbb{R}$, respectively. For a vector $x\in\mathbb{R}^n$, we denote its transpose as $x^{\prime}$. For $x\in\mathbb{R}$, let $\lceil x\rceil$ be the smallest integer that is greater than or equal to $x$. Given any integer $n\ge1$, we define $[n]\triangleq\{1,\dots,n\}$. The cardinality of a set $\mathcal{A}$ is denoted by $|\mathcal{A}|$. Given two functions $\varphi_1:\mathbb{R}_{\ge0}\to\mathbb{R}$ and $\varphi_2:\mathbb{R}_{\ge0}\to\mathbb{R}$, $\varphi_1(n)$ is $O(\varphi_2(n))$ if there exist positive constants $c$ and $N$ such that $|\varphi_1(n)|\le c|\varphi_2(n)|$ for all $n\ge N$.

\section{The Bayesian Learning Data Source Selection Problem}
\label{sec:problem formulation for Bayesian}
In this section, we formulate the data source selection problem for Bayesian learning that we will study in this paper. Let $\Theta\triangleq\{\theta_1,\theta_2,\dots,\theta_m\}$ be a finite set of possible states of the world, where $m\triangleq|\Theta|$. We consider a set $[n]$ of data sources that can provide data streams of the state of the world. At each discrete time step $k\in\mathbb{Z}_{\ge1}$, the signal (or observation) provided by source $i\in[n]$ is denoted as $\omega_{i,k}\in S_i$, where $S_i$ is the signal space of source $i$. Conditional on the state of the world $\theta\in\Theta$, an observation profile of the $n$ sources at time $k$, denoted as $\omega_k\triangleq(\omega_{1,k},\dots,\omega_{n,k})\in S_1\times\cdots\times S_n$, is generated by the likelihood function $\ell(\cdot|\theta)$. Let $\ell_i(\cdot|\theta)$ denote the $i$-th marginal of $\ell(\cdot|\theta)$, which is the signal structure of data source $i\in[n]$. We make the following assumption on the observation model (e.g., see \cite{jadbabaie2012non,liu2014social,lalitha2014social,nedic2017fast}).
\begin{assumption}
\label{assump:likelihood function}
For each source $i\in[n]$, the signal space $S_i$ is finite, and the likelihood function $\ell_i(\cdot|\theta)$ satisfies $l_i(s_i|\theta)>0$ for all $s_i\in S_i$ and for all $\theta\in\Theta$. Furthermore, for all $\theta\in\Theta$, the observations are independent over time, i.e.,  $\{\omega_{i,1},\omega_{i,2},\dots\}$ is a sequence of independent identically distributed (i.i.d.) random variables. The likelihood function is assumed to satisfy $\ell(\cdot|\theta)=\prod_{i=1}^n\ell_i(\cdot|\theta)$ for all $\theta\in\Theta$, where $\ell_i(\cdot|\theta)$ is the $i$-th marginal of $\ell(\cdot|\theta)$.
\end{assumption}

Consider the scenario where there is a (central) designer who needs to select a subset of data sources in order to learn the true state of the world based on the observations from the selected sources. Specifically, each data source $i\in [n]$ is assumed to have an associated selection cost $h_i\in\mathbb{R}_{>0}$. Considering any $\mathcal{I}\triangleq\{n_1,n_2,\dots,n_{\tau}\}$ with $\tau=|\mathcal{I}|$, we let $h(\mathcal{I})$ denote the sum of the costs of the selected sources in $\mathcal{I}$, i.e., $h(\mathcal{I})\triangleq\sum_{n_i\in\mathcal{I}}h_{n_i}$. Let $\omega_{\mathcal{I},k}\triangleq(\omega_{n_1,k},\dots,\omega_{n_{\tau},k})\in S_{n_1}\times\cdots\times S_{n_{\tau}}$ be the observation profile (conditioned on $\theta\in\Theta$) generated by the likelihood function $\ell_{\mathcal{I}}(\cdot|\theta)$, where $\ell_{\mathcal{I}}(\cdot|\theta)=\prod_{i=1}^{\tau}\ell_{n_i}(\cdot|\theta)$. We assume that the designer knows $\ell_i(\cdot|\theta)$ for all $\theta\in\Theta$ and for all $i\in[n]$, and thus knows $\ell_{\mathcal{I}}(\cdot|\theta)$ for all $\mathcal{I}\subseteq [n]$ and for all $\theta\in\Theta$. After the data sources are selected, the designer updates its belief of the state of the world using the following standard Bayes' rule:
\begin{equation}
\label{eqn:bayesian update}
\mu^{\mathcal{I}}_{k+1}(\theta)=\frac{\mu_0(\theta)\prod_{j=0}^k\ell_{\mathcal{I}}(\omega_{\mathcal{I},j+1}|\theta)}{\sum_{\theta_p\in\Theta}\mu_0(\theta_p)\prod_{j=0}^k\ell_{\mathcal{I}}(\omega_{\mathcal{I},j+1}|\theta_p)}\ \forall\theta\in\Theta,
\end{equation}
where $u^{\mathcal{I}}_{k+1}(\theta)$ is the belief of the designer that $\theta$ is the true state at time step $k+1$, and $\mu_0(\theta)$ is the initial (or prior) belief of the designer that $\theta$ is the true state. We take $\sum_{\theta\in\Theta}\mu_0(\theta)=1$ and $\mu_0(\theta)\in\mathbb{R}_{\ge0}$ for all $\theta\in\Theta$. Note that $\sum_{\theta\in\Theta}\mu^{\mathcal{I}}_k(\theta)=1$ for all $\mathcal{I}\subseteq[n]$ and for all $k\in\mathbb{Z}_{\ge1}$, where $0\le\mu_k^{\mathcal{I}}(\theta)\le1$ for all $\theta\in\Theta$. In other words, $\mu_k^{\mathcal{I}}(\cdot)$ is a probability distribution over $\Theta$ for all $k\in\mathbb{Z}_{\ge1}$ and for all $\mathcal{I}\subseteq[n]$. Rule \eqref{eqn:bayesian update} is also equivalent to the following recursive rule:
\begin{equation}
\label{eqn:bayesian update 2}
\mu^{\mathcal{I}}_{k+1}(\theta)=\frac{\mu^{\mathcal{I}}_k(\theta)\ell_{\mathcal{I}}(\omega_{\mathcal{I},k+1}|\theta)}{\sum_{\theta_p\in\Theta}\mu^{\mathcal{I}}_k(\theta_p)\ell_{\mathcal{I}}(\omega_{\mathcal{I},k+1}|\theta_p)}\ \forall\theta\in\Theta,
\end{equation}
with $\mu_0^{\mathcal{I}}(\theta)\triangleq\mu_0(\theta)$ for all $\mathcal{I}\subseteq[n]$. For a given state $\theta\in\Theta$ and a given $\mathcal{I}\subseteq[n]$, we define the set of {\it observationally equivalent} states to $\theta$ as 
\begin{equation}\label{eqn:def of F_theta 0}
F_{\theta}(\mathcal{I})\triangleq\mathop{\arg\min}_{\theta_p\in\Theta}D_{KL}(\ell_{\mathcal{I}}(\cdot|\theta_p)\|\ell_{\mathcal{I}}(\cdot|\theta)),
\end{equation}
where $D_{KL}(\ell_{\mathcal{I}}(\cdot|\theta_p)\|\ell_{\mathcal{I}}(\cdot|\theta))$ is the Kullback-Leibler (KL) divergence between the likelihood functions $\ell_{\mathcal{I}}(\cdot|\theta_p)$ and $\ell_{\mathcal{I}}(\cdot|\theta)$.  Noting that $D_{KL}(\ell_{\mathcal{I}}(\cdot|\theta)\|\ell_{\mathcal{I}}(\cdot|\theta))=0$ and that the KL divergence is always nonnegative, we have $\theta\in F_{\theta}(\mathcal{I})$ for all $\theta\in\Theta$ and for all $\mathcal{I}\subseteq [n]$. Equivalently, we can write $F_{\theta}(\mathcal{I})$ as
\begin{equation}
\label{eqn:def of F_theta}
F_{\theta}(\mathcal{I})=\{\theta_p\in\Theta:\ \ell_{\mathcal{I}}(s_{\mathcal{I}}|\theta_p)=\ell_{\mathcal{I}}(s_{\mathcal{I}}|\theta),\forall s_{\mathcal{I}}\in S_{\mathcal{I}}\},
\end{equation}
where $S_{\mathcal{I}}\triangleq S_{n_1}\times\cdots\times S_{n_\tau}$. Note that $F_{\theta}(\mathcal{I})$ is the set of states that cannot be distinguished from $\theta$ based on the data streams provided by the data sources indicated by $\mathcal{I}$. Moreover, we define $F_{\theta}(\emptyset)\triangleq\Theta$. Noting that $\ell_{\mathcal{I}}(\cdot|\theta)=\prod_{i=1}^{\tau}\ell_{n_i}(\cdot|\theta)$ under Assumption $\ref{assump:likelihood function}$, we can further obtain from Eqs.~\eqref{eqn:def of F_theta 0}-\eqref{eqn:def of F_theta} the following:
\begin{equation}
\label{eqn:def of F_bar}
F_{\theta}(\mathcal{I})=\bigcap_{n_i\in\mathcal{I}}F_{\theta}(n_i),
\end{equation}
for all $\mathcal{I}\subseteq[n]$ and for all $\theta\in\Theta$. Using similar arguments to those for Lemma $1$ in \cite{mitra2020new}, one can show the following result.
\begin{lemma}
\label{lemma:convergence of bayesian rule}
Suppose the true state of the world is $\theta^*$, and $\mu_0(\theta)>0$ for all $\theta\in\Theta$. For all $\mathcal{I}\subseteq [n]$, the rule given in \eqref{eqn:bayesian update} ensures: (a)  $\mathop{\lim}_{k\to\infty}\mu^{\mathcal{I}}_k(\theta_p)=0$ almost surely (a.s.) for all $\theta_p\notin F_{\theta^*}(\mathcal{I})$; and (b) $\lim_{k\to\infty}\mu^{\mathcal{I}}_k(\theta_q)=\frac{\mu_0(\theta_q)}{\sum_{\theta\in F_{\theta^*}(\mathcal{I})}\mu_0(\theta)}$ a.s. for all $\theta_q\in\ F_{\theta^*}(\mathcal{I})$, where $F_{\theta^*}(\mathcal{I})$ is given by Eq.~\eqref{eqn:def of F_bar}.
\end{lemma}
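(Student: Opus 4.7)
My plan is to exploit the classical log-likelihood ratio trick. Fix the true state $\theta^*$ and an arbitrary $\mathcal{I}\subseteq[n]$, and define $\lambda_k(\theta_p)\triangleq\log(\mu^{\mathcal{I}}_k(\theta_p)/\mu^{\mathcal{I}}_k(\theta^*))$ for every $\theta_p\in\Theta$. Under Assumption~\ref{assump:likelihood function}, every $\ell_{\mathcal{I}}(\cdot|\theta)$ is strictly positive on the finite alphabet $S_{\mathcal{I}}$, and since $\mu_0(\theta)>0$ for all $\theta$, these log-ratios are finite for every $k$. Plugging into \eqref{eqn:bayesian update 2} cancels the normalization constant and yields
\[
\lambda_{k+1}(\theta_p)-\lambda_k(\theta_p)=\log\frac{\ell_{\mathcal{I}}(\omega_{\mathcal{I},k+1}|\theta_p)}{\ell_{\mathcal{I}}(\omega_{\mathcal{I},k+1}|\theta^*)}.
\]
Telescoping gives $\lambda_k(\theta_p)=\lambda_0(\theta_p)+\sum_{j=1}^{k}\log(\ell_{\mathcal{I}}(\omega_{\mathcal{I},j}|\theta_p)/\ell_{\mathcal{I}}(\omega_{\mathcal{I},j}|\theta^*))$, a random walk whose increments are i.i.d.\ (by the temporal independence in Assumption~\ref{assump:likelihood function}), bounded (finite alphabet, strictly positive likelihoods), and hence integrable with common mean $-D_{KL}(\ell_{\mathcal{I}}(\cdot|\theta^*)\|\ell_{\mathcal{I}}(\cdot|\theta_p))$ under the true generating measure. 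The strong law of large numbers therefore applies.

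For part~(a), if $\theta_p\notin F_{\theta^*}(\mathcal{I})$ then by the equivalent characterization \eqref{eqn:def of F_theta} there exists $s\in S_{\mathcal{I}}$ with $\ell_{\mathcal{I}}(s|\theta_p)\neq\ell_{\mathcal{I}}(s|\theta^*)$; Gibbs' inequality on the finite alphabet with strictly positive reference then gives $D_{KL}(\ell_{\mathcal{I}}(\cdot|\theta^*)\|\ell_{\mathcal{I}}(\cdot|\theta_p))>0$. The SLLN yields $\lambda_k(\theta_p)/k\to-D_{KL}(\ell_{\mathcal{I}}(\cdot|\theta^*)\|\ell_{\mathcal{I}}(\cdot|\theta_p))<0$ a.s., so $\lambda_k(\theta_p)\to-\infty$ a.s. Since $\mu^{\mathcal{I}}_k(\theta_p)=\mu^{\mathcal{I}}_k(\theta^*)\,e^{\lambda_k(\theta_p)}\le e^{\lambda_k(\theta_p)}$, this gives $\mu^{\mathcal{I}}_k(\theta_p)\to 0$ a.s.

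For part~(b), any $\theta_q\in F_{\theta^*}(\mathcal{I})$ satisfies $\ell_{\mathcal{I}}(\cdot|\theta_q)\equiv\ell_{\mathcal{I}}(\cdot|\theta^*)$ pointwise on $S_{\mathcal{I}}$, so every increment in the telescoping identity is zero and $\lambda_k(\theta_q)=\lambda_0(\theta_q)$ for all $k$, i.e., $\mu^{\mathcal{I}}_k(\theta_q)=(\mu_0(\theta_q)/\mu_0(\theta^*))\,\mu^{\mathcal{I}}_k(\theta^*)$ identically in $k$. Combining this with $\sum_{\theta\in\Theta}\mu^{\mathcal{I}}_k(\theta)=1$ and the a.s.\ vanishing of $\mu^{\mathcal{I}}_k(\theta)$ outside $F_{\theta^*}(\mathcal{I})$ from part~(a), summing the identity over $\theta_q\in F_{\theta^*}(\mathcal{I})$ forces $\mu^{\mathcal{I}}_k(\theta^*)\to\mu_0(\theta^*)/\sum_{\theta\in F_{\theta^*}(\mathcal{I})}\mu_0(\theta)$ a.s., and multiplying by $\mu_0(\theta_q)/\mu_0(\theta^*)$ delivers the claimed limit for each $\theta_q\in F_{\theta^*}(\mathcal{I})$. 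The main (mild) obstacle I anticipate is cleanly reconciling the two descriptions of $F_{\theta^*}(\mathcal{I})$ in \eqref{eqn:def of F_theta 0} and \eqref{eqn:def of F_theta}; once one observes that on a finite alphabet with strictly positive reference distribution, $D_{KL}(p\|q)=0$ iff $p=q$, both \emph{argmin} and \emph{pointwise-equality} descriptions agree, and the strict positivity of $D_{KL}(\ell_{\mathcal{I}}(\cdot|\theta^*)\|\ell_{\mathcal{I}}(\cdot|\theta_p))$ for $\theta_p\notin F_{\theta^*}(\mathcal{I})$ that drives the SLLN argument follows immediately.
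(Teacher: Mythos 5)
Your proof is correct and complete: the log-likelihood-ratio telescoping, the SLLN with negative drift $-D_{KL}(\ell_{\mathcal{I}}(\cdot|\theta^*)\|\ell_{\mathcal{I}}(\cdot|\theta_p))<0$ for $\theta_p\notin F_{\theta^*}(\mathcal{I})$, the constancy of the ratio $\mu^{\mathcal{I}}_k(\theta_q)/\mu^{\mathcal{I}}_k(\theta^*)$ for $\theta_q\in F_{\theta^*}(\mathcal{I})$, and the normalization step are exactly the argument the paper invokes (it defers to Lemma 1 of the cited reference and uses the same $\delta$/$\sigma$ log-ratio machinery in its Appendix proof of the non-Bayesian analog, Lemma~\ref{lemma:convergence of nonbayesian rule}). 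Your care in reconciling the argmin and pointwise-equality descriptions of $F_{\theta^*}(\mathcal{I})$ via strict positivity of the likelihoods is a nice touch but does not change the route.
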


Consider a true state $\theta^*\in\Theta$ and a set $\mathcal{I}\subseteq [n]$ of selected sources. In order to characterize the (steady-state) learning performance of rule \eqref{eqn:bayesian update}, we will use the following error metric (e.g., \cite{jadbabaie2013information}):
\begin{equation}
e_{\theta^*}(\mathcal{I}) \triangleq \frac{1}{2}\mathop{\lim}_{k\to\infty} \|\mu^{\mathcal{I}}_k - \mathbf{1}_{\theta^*}\|_1,
\end{equation}
where $\mu^{\mathcal{I}}_k\triangleq\begin{bmatrix}\mu^{\mathcal{I}}_k(\theta_1) & \cdots & \mu^{\mathcal{I}}_k(\theta_m)\end{bmatrix}^{\prime}$, and $\mathbf{1}_{\theta^*}\in\mathbb{R}^m$ is a (column) vector where the element that corresponds to $\theta^*$ is $1$ and all the other elements are zero. Note that $\frac{1}{2}\|\mu^{\mathcal{I}}_k - \mathbf{1}_{\theta^*}\|_1$ is also known as the total variation distance between the two distributions $\mu_k^{\mathcal{I}}$ and $\mathbf{1}_{\theta^*}$ (e.g., \cite{bremaud2013markov}). Also note that $e_{\theta^*}(\mathcal{I})$ exists (a.s.) due to Lemma~$\ref{lemma:convergence of bayesian rule}$. We then see from Lemma~$\ref{lemma:convergence of bayesian rule}$ that $e_{\theta^*}(\mathcal{I})=1-\frac{\mu_0(\theta^*)}{\sum_{\theta\in F_{\theta^*}(\mathcal{I})}\mu_0(\theta)}$ holds almost surely. Since the true state is not known a priori to the designer, we further define
\begin{equation}
\label{eqn:error of bayesian case}
e^s_{\theta_p}(\mathcal{I})\triangleq1-\frac{\mu_0(\theta_p)}{\sum_{\theta\in F_{\theta_p}(\mathcal{I})}\mu_0(\theta)}\ \forall\theta_p\in\Theta,
\end{equation}
which represents the (steady-state) total variation distance between the designer's belief $\mu_k^{\mathcal{I}}$ and $\mathbf{1}_{\theta_p}$, when $\theta_p$ is assumed to be the true state of the world. We then define the Bayesian Learning Data Source Selection (BLDS) problem as follows.
\begin{problem}
\label{problem:SSL}
(BLDS) Consider a set $\Theta=\{\theta_1,\dots,\theta_m\}$ of possible states of the world; a set $[n]$ of data sources providing data streams, where the signal space of source $i\in[n]$ is $S_i$ and the observation from source $i\in [n]$ under state $\theta\in\Theta$ is generated by $\ell_i(\cdot|\theta)$; a selection cost $h_i\in\mathbb{R}_{>0}$ of each source $i\in [n]$; an initial belief $\mu_0(\theta)\in\mathbb{R}_{>0}$ for all $\theta\in\Theta$ with $\sum_{\theta\in\Theta}\mu_0(\theta)=1$; and prescribed error bounds $0\le R_{\theta_p}\le 1$ ($R_{\theta_p}\in\mathbb{R}$) for all $\theta_p\in\Theta$. The BLDS problem is to find a set of selected data sources $\mathcal{I}\subseteq[n]$ that solves
\begin{equation}
\label{eqn:problem 1 obj}
\begin{split}
&\mathop{\min}_{\mathcal{I}\subseteq [n]} h(\mathcal{I})\\
& s.t. \ e^s_{\theta_p}(\mathcal{I})\le R_{\theta_p}\ \forall\theta_p\in\Theta,
\end{split}
\end{equation}
where $e^s_{\theta_p}(\mathcal{I})$ is defined in \eqref{eqn:error of bayesian case}.
\end{problem}

Note that the constraints in \eqref{eqn:problem 1 obj} also capture the fact that the true state of the world is unknown to the designer a priori. In other words, for any set $\mathcal{I}\subseteq[n]$ and for any $\theta_p\in\Theta$, the constraint $e^s_{\theta_p}(\mathcal{I})\le R_{\theta_p}$ requires the (steady-state) learning error $e_{\theta_p}^s(\mathcal{I})$ to be upper bounded by $R_{\theta_p}$ when the true state of the world is assumed to be $\theta_p$. Moreover, the interpretation of $R_{\theta_p}$ for $\theta_p\in\Theta$ is as follows. When $R_{\theta_p}=0$, we see from \eqref{eqn:error of bayesian case} and the constraint $e_{\theta_p}^s(\mathcal{I})\le R_{\theta_p}$ that $F_{\theta_p}(\mathcal{I})=\{\theta_p\}$. In other words, the constraint $e_{\theta_p}^s(\mathcal{I})\le 0$ requires that any $\theta_q\in\Theta\setminus\{\theta_p\}$ is {\it not} observationally equivalent to $\theta_p$, based on the observations from the data sources indicated by $\mathcal{I}\subseteq[n]$. Next, when $R_{\theta_p}=1$, we know from \eqref{eqn:error of bayesian case} that the constraint $e^s_{\theta_p}(\mathcal{I})\le1$ is satisfied for all $\mathcal{I}\subseteq[n]$. Finally, when $0<R_{\theta_p}<1$ and $\mu_0(\theta)=\frac{1}{m}$ for all $\theta\in\Theta$, where $m=|\Theta|$, we see from \eqref{eqn:error of bayesian case} that the constraint $e^s_{\theta_p}(\mathcal{I})\le R_{\theta_p}$ is equivalent to $|F_{\theta_p}(\mathcal{I})|\le \frac{1}{1-R_{\theta_p}}$, i.e., the number of states that are observationally equivalent to $\theta_p$ should be less than or equal to $\frac{1}{1-R_{\theta_p}}$, based on the observations from the data source indicated by $\mathcal{I}\subseteq[n]$. In summary, the value of $R_{\theta_p}$ in the constraints represents the requirements of the designer on distinguishing state $\theta_p$ from other states in $\Theta$, where a smaller value of $R_{\theta_p}$ would imply that the designer wants to distinguish $\theta_p$ from more states in $\Theta$ and vice versa. Supposing $R_{\theta_p}=R$ for all $\theta_p\in\Theta$, where $0\le R\le 1$ and $R\in\mathbb{R}$, we see that the constraints in  \eqref{eqn:problem 1 obj} can be equivalently written as $\mathop{\max}_{\theta_p\in\Theta}e_{\theta_p}^s(\mathcal{I})\le R$. 

\begin{remark}
The problem formulation that we described above can be extended to the scenario where the data sources are distributed among a set of agents, and the agents {\it collaboratively} learn the true state of the world using their own observations and communications with other agents. This scenario is known as distributed non-Bayesian learning (e.g., \cite{nedic2017fast}). The goal of the (central) designer is then to select a subset of all the agents whose data sources will be used to collect observations such that the learning error of all the agents is within a prescribed range. More details about this extension can be found in the Appendix.
\end{remark}

Next, we show that the BLDS problem is NP-hard via a reduction from the set cover problem defined in Problem~$\ref{problem:set cover}$, which is known to be NP-hard (e.g., \cite{garey1979computers}, \cite{feige1998threshold}).
\begin{problem}
\label{problem:set cover}
(Set Cover) Consider a set $U=\{u_1,\dots,u_d\}$ and a collection of subsets of $U$, denoted as $\mathcal{C}=\{C_1,\dots,C_k\}$. The set cover problem is to select as few as possible subsets from $\mathcal{C}$ such that every element in $U$ is contained in at least one of the selected subsets.
\end{problem}

\begin{theorem}
\label{thm:SSL is NP-hard}
The BLDS problem is NP-hard even when all the data sources have the same cost, i.e., $h_i=1$ for all $i\in[n]$.
\end{theorem}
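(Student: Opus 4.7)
The plan is to give a polynomial-time reduction from Set Cover (Problem~\ref{problem:set cover}) to BLDS that preserves unit costs, thereby showing BLDS is at least as hard as Set Cover. My starting observation is that, because $\mu_0$ is strictly positive, the error expression in \eqref{eqn:error of bayesian case} combined with the special choice $R_{\theta_p}=0$ collapses to the purely combinatorial requirement $F_{\theta_p}(\mathcal{I})=\{\theta_p\}$, which in turn decomposes as an intersection over the selected sources by \eqref{eqn:def of F_bar}. This gives me a direct hook for encoding a covering constraint on $\mathcal{I}$ inside a BLDS instance.

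Given an instance $(U,\mathcal{C})$ of Set Cover with $U=\{u_1,\dots,u_d\}$ and $\mathcal{C}=\{C_1,\dots,C_k\}$, I would construct a BLDS instance with states $\Theta=\{\theta_0,\theta_1,\dots,\theta_d\}$ (one reference state plus one state per element of $U$), with $n=k$ unit-cost sources having binary signal spaces $S_j=\{0,1\}$, with uniform prior $\mu_0\equiv 1/(d+1)$, and with error bounds $R_{\theta_0}=0$ and $R_{\theta_p}=1$ for every $p\ge 1$. The likelihoods are tuned so that source $j$ can statistically separate $\theta_0$ from $\theta_i$ iff $u_i\in C_j$: concretely $\ell_j(\cdot|\theta_0)=(1/2,1/2)$, $\ell_j(\cdot|\theta_i)=(1/3,2/3)$ when $u_i\in C_j$, and $\ell_j(\cdot|\theta_i)=(1/2,1/2)$ otherwise. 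This satisfies Assumption~\ref{assump:likelihood function} and, by \eqref{eqn:def of F_theta}, yields $F_{\theta_0}(j)=\{\theta_0\}\cup\{\theta_i:u_i\notin C_j\}$.

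To finish, I combine this with the intersection identity \eqref{eqn:def of F_bar} to obtain $F_{\theta_0}(\mathcal{I})=\{\theta_0\}\cup\{\theta_i:u_i\notin\bigcup_{j\in\mathcal{I}} C_j\}$, so by \eqref{eqn:error of bayesian case} the constraint $e^s_{\theta_0}(\mathcal{I})\le 0$ holds iff $\bigcup_{j\in\mathcal{I}} C_j=U$, i.e.\ iff $\mathcal{I}$ is a set cover; the remaining constraints $e^s_{\theta_p}(\mathcal{I})\le 1$ for $p\ge 1$ are vacuous. Hence the minimum of $h(\mathcal{I})=|\mathcal{I}|$ over feasible $\mathcal{I}$ equals the minimum set-cover size, and since the construction is clearly polynomial in $d$ and $k$, a polynomial-time BLDS solver would yield a polynomial-time Set Cover solver.

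I expect the only subtle point to be the positivity requirement in Assumption~\ref{assump:likelihood function}: one cannot encode ``source $j$ fails to separate $\theta_0$ from $\theta_i$'' by zeroing out any signal probability. The binary construction above circumvents this by using distinct strictly positive distributions only when separation is needed and by duplicating $\ell_j(\cdot|\theta_0)$ otherwise, which respects the assumption while keeping the encoding tight. Everything else reduces to applying the combinatorial identities \eqref{eqn:def of F_bar}--\eqref{eqn:error of bayesian case} already established in the paper.
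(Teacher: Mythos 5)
Your proposal is correct and follows essentially the same reduction as the paper: the same Set Cover encoding with one reference state plus one state per element, the same $(1/2,1/2)$ versus $(1/3,2/3)$ binary likelihoods, uniform prior, and error bounds $0$ and $1$, differing only in state indexing. The paper phrases the final step via De Morgan's laws on the complements $F^c_{\theta_1}(i)$ rather than directly on $F_{\theta_0}(\mathcal{I})$, but the argument is the same.
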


\begin{proof}
We give a polynomial-time reduction from the set cover problem to the BLDS problem. Consider an arbitrary instance of the set cover problem as described in Problem $\ref{problem:set cover}$, with the set $U=\{u_1,\dots,u_d\}$ and the collection $\mathcal{C}=\{C_1,\dots,C_k\}$, where $C_i$'s are subsets of $U$. Denote $C_i=\{u_{i_1},\dots,u_{i_{\beta_i}}\}$ for all $i\in [k]$, where $\beta_i=|C_i|$. We then construct an instance of the BLDS problem as follows. The set of possible states of the world is set to be $\Theta=\{\theta_1,\dots,\theta_{d+1}\}$. The number of data sources is set as $n=k$, where the signal space of source $i$ is set to be $S_i=\{0,1\}$ for all $i\in[k]$. For any source $i\in[k]$, the likelihood function $\ell_i(\cdot|\theta)$ corresponding to source $i\in [k]$ is set to satisfy that $\ell_i(0|\theta_1)=\ell_i(1|\theta_1)=\frac{1}{2}$, $\ell_i(0|\theta_{q+1})=\ell_i(1|\theta_{q+1})=\frac{1}{2}$ for all $u_q\in U\setminus C_i$, and $\ell_i(0|\theta_{i_j+1})=\frac{1}{3}$ and $\ell_i(1|\theta_{i_j+1})=\frac{2}{3}$ for all $u_{i_j}\in C_i$. The selection cost is set as $h_i=1$ for all $i\in[k]$. The initial belief is set to be $\mu_0(\theta_p)=\frac{1}{d+1}$ for all $p\in [d+1]$. The prescribed error bounds are set as $R_{\theta_1}=0$ and $R_{\theta_p}=1$ for all $p\in\{2,\dots,d+1\}$. Note that the set of selected sources is denoted as $\mathcal{I}=\{n_1,\dots,n_{\tau}\}\subseteq [k]$.

Since $R_{\theta_p}=1$ for all $p\in\{2,\dots,d+1\}$, the constraint $e^s_{\theta_p}(\mathcal{I})\le R_{\theta_p}$ is satisfied for all $\mathcal{I}\subseteq[n]$ and for all $p\in\{2,\dots,d+1\}$. We then focus on the constraint corresponding to $\theta_1$. Letting $R_{\theta_1}=0$ and $\mu_0(\theta_p)=\frac{1}{d+1}$ for all $p\in[d+1]$, the constraint $e^s_{\theta_1}(\mathcal{I})\le R_{\theta_1}$ is equivalent to $|F_{\theta_1}(\mathcal{I})|\le 1$, where $F_{\theta_1}(\mathcal{I})=\bigcap_{n_i\in\mathcal{I}}F_{\theta_1}(n_i)$ with $F_{\theta_1}(n_i)$ given by Eq.~\eqref{eqn:def of F_theta}. Denote $F_{\theta_1}^c(i)\triangleq\Theta\setminus F_{\theta_1}(i)$ for all $i\in[k]$. From the way we set the likelihood function $\ell_i(\cdot|\theta)$ for source $i\in[k]$ in the constructed instance of the BLDS problem, we see that $F_{\theta_1}^c(i)=\{\theta_{i_1+1},\dots,\theta_{i_{\beta_i}+1}\}$ for all $i\in[k]$, i.e., $C_i\in\mathcal{C}$ corresponds to $F_{\theta_1}^c(i)$ for all $i\in[k]$.  Moreover, using De Morgan's laws, we have
\begin{equation}
\label{eqn:de morgan law}
F_{\theta_1}(\mathcal{I})=\bigcap_{n_i\in\mathcal{I}
}F_{\theta_1}(n_i)=\Theta\setminus\big(\bigcup_{n_i\in\mathcal{I}}F_{\theta_1}^c(n_i)\big).
\end{equation}

Considering any $\mathcal{I}=\{n_1,\dots,n_{\tau}\}\subseteq[k]$ where $\tau=|\mathcal{I}|$, we denote $\mathcal{C}_{\mathcal{I}}\triangleq\{C_{n_1},\dots,C_{n_{\tau}}\}$. We will show that $\mathcal{I}$ is a feasible solution to the given set cover instance (i.e., for any $u_j\in U$, there exists $C_i\in\mathcal{C}_{\mathcal{I}}$ such that $u_j\in C_i$) if and only if $\mathcal{I}$ is a feasible solution to the constructed BLDS instance (i.e., the constraint $e^s_{\theta_1}(\mathcal{I})\le R_{\theta_1}$ is satisfied).

Suppose $\mathcal{I}$ is a feasible solution to the given set cover instance. Since $C_i\in\mathcal{C}$ corresponds to $F_{\theta_1}^c(i)$ for all $i\in[k]$, we see that for any $\theta_p\in\{\theta_2,\dots,\theta_{d+1}\}$, there exists $n_i\in\mathcal{I}$ such that $\theta_p\in F_{\theta_1}^c(n_i)$ in the constructed BLDS instance, which implies $\bigcup_{n_i\in\mathcal{I}}F_{\theta_1}^c(n_i)=\{\theta_2,\dots,\theta_{d+1}\}$. It follows from \eqref{eqn:de morgan law} that $F_{\theta_1}(\mathcal{I})=\Theta\setminus\{\theta_2,\dots,\theta_{d+1}\}=\{\theta_1\}$, which implies that the constraint $|F_{\theta_1}(\mathcal{I})|\le 1$ is satisfied, i.e., the constraint $e^s_{\theta_1}(\mathcal{I})\le R_{\theta_1}$ is satisfied. Conversely, suppose $\mathcal{I}$ is a feasible solution to the constructed BLDS instance, i.e., the constraint $e^s_{\theta_1}(\mathcal{I})\le R_{\theta_1}$ is satisfied, which implies $|F_{\theta_1}(\mathcal{I})|\le 1$. Noting that $\theta_1\in F_{\theta_1}(\mathcal{I})$ for all $\mathcal{I}\subseteq [k]$, we have $F_{\theta_1}(\mathcal{I})=\{\theta_1\}$. We then see from \eqref{eqn:de morgan law} that $\bigcup_{n_i\in\mathcal{I}}F_{\theta_1}^c(n_i)=\{\theta_2,\dots,\theta_{d+1}\}$, i.e., for all $\theta_p\in\{\theta_2,\dots,\theta_{d+1}\}$, there exists $n_i\in\mathcal{I}$ such that $\theta_p\in F_{\theta_1}^c(n_i)$. It then follows from the one-to-one correspondence between $C_i$ and $F_{\theta_1}^c(i)$ that  for any $u_j\in U$, there exists $C_{n_i}\in\mathcal{C}_{\mathcal{I}}$ such that $u_j\in C_{n_i}$ in the set cover instance. 

Since the selection cost is set as $h_i=1$ for all $i\in[k]$, we see from the above arguments that $\mathcal{I}^*$ is an optimal solution to the set cover instance if and only if it is an optimal solution to the BLDS instance. Since the set cover problem is NP-hard, we conclude that the BLDS problem is NP-hard.
\end{proof}

\section{Submodularity and Greedy Algorithms for the BLDS Problem}
\label{sec:greedy for SPL}
In this section, we first show that the BLDS problem can be transformed into an instance of the submodular set covering problem studied in \cite{wolsey1982analysis}. We then consider two greedy algorithms for the BLDS problem and study their performance guarantees when applied to the problem. We start with the following definition.

\begin{definition}
\label{def:submodular}
(\cite{nemhauser1978analysis}) A set function $f:2^{[n]}\to\mathbb{R}$ is submodular if for all $X\subseteq Y\subseteq[n]$ and for all $j\in [n]\setminus Y$,
\begin{equation}
\label{eqn:submodular def 1}
f(X\cup\{j\})-f(X)\ge f(Y\cup\{j\})-f(Y).
\end{equation}
Equivalently, $f:2^{[n]}\to\mathbb{R}$ is submodular if for all $X,Y\subseteq[n]$,
\begin{equation}
\label{eqn:submodular def 2}
\sum_{j\in Y\setminus X}(f(X\cup\{j\})-f(X))\ge f(Y\cup X)-f(X).
\end{equation}
\end{definition}

To proceed, note that the constraint corresponding to $\theta_p$ in Problem $\ref{problem:SSL}$ (i.e., \eqref{eqn:problem 1 obj}) is satisfied for all $\mathcal{I}\subseteq[n]$ if $R_{\theta_p}=1$. Since $\mu_0(\theta)>0$ for all $\theta\in\Theta$, we can then equivalently write the constraints as
\begin{equation}
\label{eqn:rewritten constraint for SSL 1}
\sum_{\theta\in F_{\theta_p}(\mathcal{I})}\mu_0(\theta)\le\frac{\mu_0(\theta_p)}{1-R_{\theta_p}},\ \forall\theta_p\in\Theta\ \text{with}\ R_{\theta_p}<1.
\end{equation}
Define $F_{\theta}^c(\mathcal{I})\triangleq\Theta\setminus F_{\theta}(\mathcal{I})$ for all $\theta\in\Theta$ and for all $\mathcal{I}\subseteq [n]$, where $F_{\theta}(\mathcal{I})$ is given by Eq.~\eqref{eqn:def of F_bar}. Note that $F_{\theta}^c(\mathcal{I})$ is the set of states that can be distinguished from $\theta$, given the data sources indicated by $\mathcal{I}$. Using the fact $\sum_{\theta\in\Theta}\mu_0(\theta)=1$, \eqref{eqn:rewritten constraint for SSL 1} can be equivalently written as
\begin{equation}
\label{eqn:rewritten constraint for SSL 2}
\sum_{\theta\in F_{\theta_p}^c(\mathcal{I})}\mu_0(\theta)\ge 1-\frac{\mu_0(\theta_p)}{1-R_{\theta_p}},\ \forall\theta_p\in\Theta\ \text{with}\ R_{\theta_p}<1.
\end{equation}
Moreover, we note that the constraint corresponding to $\theta_p$ in \eqref{eqn:rewritten constraint for SSL 2} is satisfied for all $\mathcal{I}\subseteq[n]$ if $1-\frac{\mu_0(\theta_p)}{1-R_{\theta_p}}\le 0$, i.e., $R_{\theta_p}\ge1-\mu_0(\theta_p)$.  Hence, we can equivalently write \eqref{eqn:rewritten constraint for SSL 2} as 
\begin{equation*}
\label{eqn:rewritten constraint for SSL 3}
\sum_{\theta\in F_{\theta_p}^c(\mathcal{I})}\mu_0(\theta)\ge 1-\frac{\mu_0(\theta_p)}{1-R_{\theta_p}},\ \forall\theta_p\in\bar{\Theta},
\end{equation*}
where $\bar{\Theta}\triangleq\{\theta_p\in\Theta: 0\le R_{\theta_p}<1-\mu_0(\theta_p)\}$. 
For all $\mathcal{I}\subseteq [n]$, let us define
\begin{equation}
\label{eqn:def of f}
 f_{\theta_p}(\mathcal{I})\triangleq\sum_{\theta\in F_{\theta_p}^c(\mathcal{I})}\mu_0(\theta),\ \forall\theta_p\in\bar{\Theta}.
 \end{equation}
Noting that $F_{\theta_p}(\emptyset)=\Theta$, i.e., $F_{\theta_p}^c(\emptyset)=\emptyset$, we  let $f_{\theta_p}(\emptyset)=0$. It then follows directly from \eqref{eqn:def of f} that $f_{\theta_p}:2^{[n]}\to\mathbb{R}_{\ge0}$ is a monotone nondecreasing set function.\footnote{A set function $f:2^{[n]}\to\mathbb{R}$ is monotone nondecreasing if $f(X)\le f(Y)$ for all $X\subseteq Y\subseteq [n]$.}

\begin{remark}
\label{remark:n sensors constraints}
Note that in order to ensure that there exists $\mathcal{I}\subseteq [n]$ that satisfies the constraints in \eqref{eqn:rewritten constraint for SSL 2}, we assume that $f_{\theta_p}([n])\ge 1-\frac{\mu_0(\theta_p)}{1-R_{\theta_p}}$ for all $\theta_p\in\bar{\Theta}$, since $f_{\theta_p}(\cdot)$ is nondecreasing for all $\theta_p\in\bar{\Theta}$.
\end{remark}

\begin{lemma}
\label{lemma:submodular of f prime}
The set function $f_{\theta_p}:2^{[n]}\to\mathbb{R}_{\ge0}$ defined in \eqref{eqn:def of f} is submodular for all $\theta_p\in\bar{\Theta}$.
\end{lemma}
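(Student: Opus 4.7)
The plan is to recognize $f_{\theta_p}$ as a (nonnegative) weighted coverage function on the ground set $[n]$, from which submodularity is essentially automatic. The bridge is De Morgan's law applied to Eq.~\eqref{eqn:def of F_bar}, which yields
\[
F_{\theta_p}^c(\mathcal{I}) \;=\; \Theta \setminus \bigcap_{n_i \in \mathcal{I}} F_{\theta_p}(n_i) \;=\; \bigcup_{n_i \in \mathcal{I}} F_{\theta_p}^c(n_i).
\]
Substituting this into Eq.~\eqref{eqn:def of f} lets me rewrite
\[
f_{\theta_p}(\mathcal{I}) \;=\; \sum_{\theta \in \bigcup_{n_i \in \mathcal{I}} F_{\theta_p}^c(n_i)} \mu_0(\theta),
\]
i.e., $f_{\theta_p}$ is the $\mu_0$-weighted size of the union of the ``distinguishability'' sets $F_{\theta_p}^c(i)$, $i\in\mathcal{I}$.

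Next, to verify the defining inequality in \eqref{eqn:submodular def 1}, I fix arbitrary $X \subseteq Y \subseteq [n]$ and $j \in [n]\setminus Y$. Using the union representation, the marginal gains unroll to set differences:
\[
f_{\theta_p}(X\cup\{j\}) - f_{\theta_p}(X) \;=\; \sum_{\theta \in F_{\theta_p}^c(j) \setminus F_{\theta_p}^c(X)} \mu_0(\theta),
\]
and likewise with $X$ replaced by $Y$. Since $X \subseteq Y$ gives $F_{\theta_p}^c(X) \subseteq F_{\theta_p}^c(Y)$, we have the set inclusion
\[
F_{\theta_p}^c(j) \setminus F_{\theta_p}^c(Y) \;\subseteq\; F_{\theta_p}^c(j) \setminus F_{\theta_p}^c(X).
\]
Because $\mu_0(\theta) \ge 0$ for every $\theta \in \Theta$, summing over the smaller set gives a smaller value, which is precisely \eqref{eqn:submodular def 1} for $f_{\theta_p}$. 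The convention $f_{\theta_p}(\emptyset)=0$ causes no trouble since $F_{\theta_p}^c(\emptyset)=\emptyset$ fits the union formula.

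There is no real obstacle here: the content of the lemma is the well-known fact that weighted coverage functions are submodular, and the only paper-specific step is the De Morgan identity that rewrites the intersection in \eqref{eqn:def of F_bar} as a union. I would present the argument linearly, stating the union representation first, then displaying the two marginal-gain expressions side-by-side and concluding with the monotonicity of $\mu_0$-measure under set inclusion.
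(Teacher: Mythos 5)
Your proposal is correct and follows essentially the same route as the paper's proof: both apply De Morgan's law to turn the intersection in \eqref{eqn:def of F_bar} into a union of the sets $F_{\theta_p}^c(\cdot)$, express each marginal gain as the $\mu_0$-weight of $F_{\theta_p}^c(j)\setminus F_{\theta_p}^c(X)$, and conclude from the inclusion $F_{\theta_p}^c(X)\subseteq F_{\theta_p}^c(Y)$ together with nonnegativity of $\mu_0$. Your framing of $f_{\theta_p}$ as a weighted coverage function is a clean way to package the identical argument.
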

\begin{proof}
Consider any $\mathcal{I}_1\subseteq\mathcal{I}_2\subseteq[n]$ and any $j\in[n]\setminus\mathcal{I}_2$. For all $\mathcal{I}\subseteq[n]$, we will drop the dependency of $F_{\theta_p}(\mathcal{I})$ (resp., $F_{\theta_p}^c(\mathcal{I})$) on $\theta_p$, and write $F(\mathcal{I})$ (resp., $F^c(\mathcal{I})$) for notational simplicity in this proof. We then have the following:
\begin{align}\nonumber
&f_{\theta_p}(\mathcal{I}_1\cup\{j\})-f_{\theta_p}(\mathcal{I}_1)\\\nonumber
=&\sum_{\theta\in F^c(\mathcal{I}_1\cup\{j\})}\mu_0(\theta)-\sum_{\theta\in F^c(\mathcal{I}_1)}\mu_0(\theta)\label{eqn:derive submodular 1}\\
=&\sum_{\theta\in F^c(\mathcal{I}_1)\cup F^c(j)}\mu_0(\theta)-\sum_{\theta\in F^c(\mathcal{I}_1)}\mu_0(\theta)\\
=&\sum_{\theta\in(F^c(\mathcal{I}_1)\cup F^c(j))\setminus F^c(\mathcal{I}_1)}\mu_0(\theta)=\sum_{\theta\in F^c(j)\setminus F^c(\mathcal{I}_1)}\mu_0(\theta)\label{eqn:derive submodular 2}.
\end{align}
To obtain \eqref{eqn:derive submodular 1}, we note $F^c(\mathcal{I}_1\cup\{j\})=\Theta\setminus F(\mathcal{I}_1\cup\{j\})=\Theta\setminus(F(\mathcal{I}_1)\cap F(j))$, which implies (via De Morgan's laws) $F^c(\mathcal{I}_1\cup\{j\})=F^c(\mathcal{I}_1)\cup F^c(j)$. Similarly, we also have
\begin{equation}
f_{\theta_p}(\mathcal{I}_2\cup\{j\})-f_{\theta_p}(\mathcal{I}_2)=\sum_{\theta\in F^c(j)\setminus F^c(\mathcal{I}_2)}\mu_0(\theta)\label{eqn:derive submodular 3}.
\end{equation}
Since $\mathcal{I}_1\subseteq\mathcal{I}_2$, we have $F^c(j)\setminus F^c(\mathcal{I}_2)\subseteq F^c(j)\setminus F^c(\mathcal{I}_1)$, which implies via \eqref{eqn:derive submodular 2}-\eqref{eqn:derive submodular 3}
\begin{equation*}
f_{\theta_p}(\mathcal{I}_1\cup\{j\})-f_{\theta_p}(\mathcal{I}_1)\ge f_{\theta_p}(\mathcal{I}_2\cup\{j\})-f_{\theta_p}(\mathcal{I}_2).
\end{equation*}
Since the above arguments hold for all $\theta_p\in\bar{\Theta}$, we know from \eqref{eqn:submodular def 1} in Definition $\ref{def:submodular}$ that $f_{\theta_p}(\cdot)$ is submodular for all $\theta_p\in\bar{\Theta}$.
\end{proof}

Moreover, considering any $\mathcal{I}\subseteq[n]$, we define 
\begin{equation}
\label{eqn:def of f prime}
f'_{\theta_p}(\mathcal{I})\triangleq\mathop{\min}\{f_{\theta_p}(\mathcal{I}),1-\frac{\mu_0(\theta_p)}{1-R_{\theta_p}}\}\ \forall\theta_p\in\bar{\Theta},
\end{equation}
where $f_{\theta_p}(\mathcal{I})$ is defined in \eqref{eqn:def of f}. Since $f_{\theta_p}(\cdot)$ is submodular and nondecreasing with $f_{\theta_p}(\emptyset)=0$ and $f_{\theta_p}([n])\ge1-\frac{\mu_0(\theta_p)}{1-R_{\theta_p}}$, one can show that $f'_{\theta_p}(\cdot)$ is also submodular and nondecreasing with $f'_{\theta_p}(\emptyset)=0$ and $f'_{\theta_p}([n])=1-\frac{\mu_0(\theta_p)}{1-R_{\theta_p}}$. Noting that the sum of submodular functions remains submodular, we see that $\sum_{\theta_p\in\bar{\Theta}}f'_{\theta_p}(\cdot)$ is submodular and nondecreasing. We also have the following result.
\begin{lemma}
\label{lemma:equivalent constraints}
Consider any $\mathcal{I}\subseteq [n]$. The constraint $\sum_{\theta\in F_{\theta_p}^c(\mathcal{I})}\mu_0(\theta)\ge 1-\frac{\mu_0(\theta_p)}{1-R_{\theta_p}}$ holds for all $\theta_p\in\bar{\Theta}$ if and only if $\sum_{\theta_p\in\bar{\Theta}}f'_{\theta_p}(\mathcal{I})=\sum_{\theta_p\in\bar{\Theta}}f'_{\theta_p}([n])$, where $f'_{\theta_p}(\cdot)$ is defined in \eqref{eqn:def of f prime}.
\end{lemma}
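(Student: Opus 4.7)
The plan is to unpack the definition of $f'_{\theta_p}$ and use the fact that equality between sums of termwise-bounded quantities forces termwise equality.

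For brevity, write $c_p \triangleq 1 - \frac{\mu_0(\theta_p)}{1-R_{\theta_p}}$ for each $\theta_p \in \bar{\Theta}$. By the definition of $\bar{\Theta}$, we have $c_p > 0$, and by the assumption recorded in Remark~\ref{remark:n sensors constraints}, $f_{\theta_p}([n]) \ge c_p$, so $f'_{\theta_p}([n]) = \min\{f_{\theta_p}([n]), c_p\} = c_p$. Hence the claimed identity reads $\sum_{\theta_p \in \bar{\Theta}} f'_{\theta_p}(\mathcal{I}) = \sum_{\theta_p \in \bar{\Theta}} c_p$.

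First I would observe that the defining $\min$ gives the uniform upper bound $f'_{\theta_p}(\mathcal{I}) \le c_p$ for every $\theta_p \in \bar{\Theta}$ and every $\mathcal{I} \subseteq [n]$. Summing over $\theta_p \in \bar{\Theta}$ yields $\sum_{\theta_p \in \bar{\Theta}} f'_{\theta_p}(\mathcal{I}) \le \sum_{\theta_p \in \bar{\Theta}} c_p$, with equality if and only if $f'_{\theta_p}(\mathcal{I}) = c_p$ for every $\theta_p \in \bar{\Theta}$ (since each summand is individually bounded above by the corresponding term on the right).

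Next I would translate the termwise equality $f'_{\theta_p}(\mathcal{I}) = c_p$ back to the original constraint: by the definition \eqref{eqn:def of f prime}, $\min\{f_{\theta_p}(\mathcal{I}), c_p\} = c_p$ holds exactly when $f_{\theta_p}(\mathcal{I}) \ge c_p$, and by the definition \eqref{eqn:def of f} this is precisely $\sum_{\theta \in F_{\theta_p}^c(\mathcal{I})} \mu_0(\theta) \ge 1 - \frac{\mu_0(\theta_p)}{1-R_{\theta_p}}$. Combining the two implications proves both directions of the ``if and only if''.

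There is no real obstacle here; the lemma is essentially a tautology once one recognizes that $f'_{\theta_p}([n])$ saturates the cap $c_p$ and that capped sums meet their ceiling only when every summand does. The only thing to be careful about is invoking the assumption $f_{\theta_p}([n]) \ge c_p$ from Remark~\ref{remark:n sensors constraints} to guarantee $f'_{\theta_p}([n]) = c_p$; without that assumption the right-hand side of the claimed identity would not coincide with $\sum_{\theta_p \in \bar{\Theta}} c_p$ and the equivalence would fail.
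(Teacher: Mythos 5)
Your proof is correct and follows essentially the same route as the paper's: both rely on the bound $f'_{\theta_p}(\mathcal{I})\le 1-\frac{\mu_0(\theta_p)}{1-R_{\theta_p}}$, the saturation $f'_{\theta_p}([n])=1-\frac{\mu_0(\theta_p)}{1-R_{\theta_p}}$ guaranteed by Remark~\ref{remark:n sensors constraints}, and the observation that a sum of termwise-bounded quantities meets its ceiling only if every summand does. Your version merely packages the two directions into a single ``equality iff termwise equality'' step, which is a cosmetic difference.
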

\begin{proof}
Suppose the constraints $\sum_{\theta\in F_{\theta_p}^c(\mathcal{I})}\mu_0(\theta)\ge 1-\frac{\mu_0(\theta_p)}{1-R_{\theta_p}}$ hold for all $\theta_p\in\bar{\Theta}$. It follows from  \eqref{eqn:def of f prime}  that $f'_{\theta_p}(\mathcal{I})=1-\frac{\mu_0(\theta_p)}{1-R_{\theta_p}}$ for all $\theta_p\in\bar{\Theta}$. Noting that $f_{\theta_p}([n])\ge1-\frac{\mu_0(\theta_p)}{1-R_{\theta_p}}$ as argued in Remark~$\ref{remark:n sensors constraints}$, we have $f'_{\theta_p}([n])=1-\frac{\mu_0(\theta_p)}{1-R_{\theta_p}}$ for all $\theta_p\in\bar{\Theta}$, which implies $\sum_{\theta_p\in\bar{\Theta}}f'_{\theta_p}(\mathcal{I})=\sum_{\theta_p\in\bar{\Theta}}f'_{\theta_p}([n])$. Conversely, suppose $\sum_{\theta_p\in\bar{\Theta}}f'_{\theta_p}(\mathcal{I})=\sum_{\theta_p\in\bar{\Theta}}f'_{\theta_p}([n])$, i.e., $\sum_{\theta_p\in\bar{\Theta}}\Big(f'_{\theta_p}(\mathcal{I})-\big(1-\frac{\mu_0(\theta_p)}{1-R_{\theta_p}}\big)\Big)=0$. Noting from \eqref{eqn:def of f prime} that $f'_{\theta_p}(\mathcal{I})\le1-\frac{\mu_0(\theta_p)}{1-R_{\theta_p}}$ for all $\mathcal{I}\subseteq[n]$, we have $f'_{\theta_p}(\mathcal{I})=1-\frac{\mu_0(\theta_p)}{1-R_{\theta_p}}$ for all $\theta_p\in\bar{\Theta}$, i.e., $f_{\theta_p}(\mathcal{I})\ge1-\frac{\mu_0(\theta_p)}{1-R_{\theta_p}}$ for all $\theta_p\in\bar{\Theta}$. This completes the proof of the lemma.
\end{proof}

Based on the above arguments, for all $\mathcal{I}\subseteq[n]$, we further define
\begin{equation}
\label{eqn:def of z}
z(\mathcal{I})\triangleq\sum_{\theta_p\in\bar{\Theta}}f'_{\theta_p}(\mathcal{I})=\sum_{\theta_p\in\bar{\Theta}}\mathop{\min}\{f_{\theta_p}(\mathcal{I}),1-\frac{\mu_0(\theta_p)}{1-R_{\theta_p}}\},
\end{equation}
where $f_{\theta_p}(\mathcal{I})$ is defined in \eqref{eqn:def of f}. We then see from Lemma~$\ref{lemma:equivalent constraints}$ that \eqref{eqn:problem 1 obj} in Problem~$\ref{problem:SSL}$ can be equivalently written as 
\begin{equation}
\label{eqn:equivalent form of spl 1}
\begin{split}
&\mathop{\min}_{\mathcal{I}\subseteq[n]} h(\mathcal{I})\\
&s.t.\ z(\mathcal{I})=z([n]),
\end{split}
\end{equation}
where one can show that $z(\cdot)$ defined in Eq.~\eqref{eqn:def of z} is a nondecreasing and submodular set function with $z(\emptyset)=0$. Now, considering an instance of the BLDS problem, for any $\mathcal{I}\subseteq[n]$ and for any $\theta\in\Theta$, one can obtain $F_{\theta}(\mathcal{I})$ (and  $F_{\theta}^c(\mathcal{I})$) in $O(S|\mathcal{I}||\Theta|)$ time, where $S\triangleq\mathop{\max}_{n_i\in\mathcal{I}}|S_i|$ with $S_i$ to be the signal space of source $n_i\in\mathcal{I}$. Therefore, we see from \eqref{eqn:def of f} and \eqref{eqn:def of z} that for any $\mathcal{I}\subseteq[n]$, one can compute the value of $z(\mathcal{I})$ in $O(Sn|\Theta|^2)$ time. 

\subsection{Standard Greedy Algorithm}
Problem~\eqref{eqn:equivalent form of spl 1} can now be viewed as the submodular set covering problem studied in \cite{wolsey1982analysis}, where the submodular set covering problem is solved using a standard greedy algorithm with performance guarantees. Specifically, we consider the greedy algorithm defined in Algorithm~$\ref{algorithm:greedy}$ for the BLDS problem. The algorithm maintains a sequence of sets $\mathcal{I}_g^0,\mathcal{I}_g^1,\dots,\mathcal{I}_g^{T}$ containing the selected elements from $[n]$, where $T\in\mathbb{Z}_{\ge1}$. Note that Algorithm~$\ref{algorithm:greedy}$ requires $O(n^2)$ evaluations of function $z(\cdot)$, where $z(\mathcal{I})$ can be computed in $O(Sn|\Theta|^2)$ time for any $\mathcal{I}\subseteq[n]$ as argued above. In other words, the query complexity of Algorithm~$\ref{algorithm:greedy}$ is $O(n^2)$. We then have the following result from the arguments above (i.e., Lemmas~$\ref{lemma:submodular of f prime}$-$\ref{lemma:equivalent constraints}$) and Theorem~$1$ in \cite{wolsey1982analysis}, which characterizes the performance guarantees for the greedy algorithm (Algorithm $\ref{algorithm:greedy}$) when applied to the BLDS problem.

\begin{algorithm}
\textbf{Input:} $[n]$, $z:2^{[n]}\to\mathbb{R}_{\ge0}$, $h_i$ $\forall i\in[n]$\\
\textbf{Output:} $\mathcal{I}_g$
\caption{Greedy Algorithm for BLDS}\label{algorithm:greedy}
\begin{algorithmic}[1]
\State $t\gets 0$, $\mathcal{I}_g^0\gets\emptyset$
\While{$z(\mathcal{I}_g^t)< z([n])$}
    \State $j_t\in\mathop{\arg\max}_{i\in[n]\setminus\mathcal{I}_g^t}\frac{z(\mathcal{I}_g^t\cup\{i\})-z(\mathcal{I}_g^t)}{h_i}$
    \State $\mathcal{I}_g^{t+1}\gets \mathcal{I}_g^t\cup\{j_t\}$, $t\gets t+1$
\EndWhile
\State $T\gets t$, $\mathcal{I}_g\gets\mathcal{I}_g^T$
\State \textbf{return} $\mathcal{I}_g$
\end{algorithmic}
\end{algorithm}

\begin{theorem}
\label{thm:guarantee for greedy SPL}
Let $\mathcal{I}^*$ be an optimal solution to the BLDS problem. Algorithm $\ref{algorithm:greedy}$ returns a solution $\mathcal{I}_g$ to the BLDS problem (i.e., \eqref{eqn:equivalent form of spl 1}) that satisfies the following, where $\mathcal{I}_g^1,\dots,\mathcal{I}_g^{T-1}$ are specified in Algorithm~$\ref{algorithm:greedy}$.

\noindent(a) $h(\mathcal{I}_g)\le \bigg(1+\ln \displaystyle\mathop{\max}_{i\in[n],\zeta\in[T-1]}\Big\{\frac{z(i)-z(\emptyset)}{z(\mathcal{I}_g^{\zeta}\cup\{i\})-z(\mathcal{I}_g^{\zeta})}: z(\mathcal{I}_g^{\zeta}\cup\{i\})-z(\mathcal{I}_g^{\zeta})>0\Big\}\bigg)h(\mathcal{I}^*)$,

\noindent(b) $h(\mathcal{I}_g)\le\Big(1+\ln \frac{h_{j_T}(z(j_1)-z(\emptyset))}{h_{j_1}(z(\mathcal{I}_g^{T-1}\cup\{j_T\})-z(\mathcal{I}_g^{T-1}))}\Big)h(\mathcal{I}^*)$,

\noindent(c) $h(\mathcal{I}_g)\le \Big(1+\ln \frac{z([n])-z(\emptyset)}{z([n])-z(\mathcal{I}_g^{T-1})}\Big)h(\mathcal{I}^*)$,

\noindent(d) if $z(\mathcal{I})\in\mathbb{Z}_{\ge0}$ for all $\mathcal{I}\subseteq[n]$, $h(\mathcal{I}_g)\le\big(\sum_{i=i}^M\frac{1}{i}\big)h(\mathcal{I}^*)\le(1+\ln M)h(\mathcal{I}^*)$, where $M\triangleq\mathop{\max}_{j\in[n]}z(j)$.
\end{theorem}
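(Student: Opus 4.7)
The plan is to show that the rewritten form \eqref{eqn:equivalent form of spl 1} of the BLDS problem is a bona fide instance of the submodular set covering problem treated in \cite{wolsey1982analysis}, and then simply invoke Wolsey's Theorem~1 to read off bounds (a)--(d). All the hard work has in fact been done upstream: Lemma~\ref{lemma:submodular of f prime} supplies submodularity of each $f_{\theta_p}$, Lemma~\ref{lemma:equivalent constraints} supplies the equivalence between satisfying the original BLDS constraints and attaining $z(\mathcal{I})=z([n])$, and Remark~\ref{remark:n sensors constraints} guarantees that $z([n])$ is actually achievable.

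First I would verify that $z:2^{[n]}\to\mathbb{R}_{\ge 0}$ defined in \eqref{eqn:def of z} meets the hypotheses of Wolsey's framework. Specifically, for each $\theta_p\in\bar\Theta$ the truncation $f'_{\theta_p}(\mathcal{I})=\min\{f_{\theta_p}(\mathcal{I}),\,1-\mu_0(\theta_p)/(1-R_{\theta_p})\}$ inherits nondecreasing monotonicity from $f_{\theta_p}$; submodularity of $f'_{\theta_p}$ follows from the well-known fact that if $g$ is nondecreasing submodular and $c\in\mathbb{R}$, then $\mathcal{I}\mapsto\min\{g(\mathcal{I}),c\}$ is again nondecreasing submodular. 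Summing over $\theta_p\in\bar\Theta$ preserves both properties, so $z(\cdot)$ is nondecreasing and submodular, and plainly $z(\emptyset)=0$ since $f_{\theta_p}(\emptyset)=0$ for every $\theta_p$. By Lemma~\ref{lemma:equivalent constraints}, the constraint $z(\mathcal{I})=z([n])$ in \eqref{eqn:equivalent form of spl 1} is equivalent to the original constraint $e^s_{\theta_p}(\mathcal{I})\le R_{\theta_p}$ for all $\theta_p\in\Theta$, so the optimal values and optimal solution sets of Problem~\ref{problem:SSL} and \eqref{eqn:equivalent form of spl 1} coincide.

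Next, I would observe that Algorithm~\ref{algorithm:greedy} is exactly Wolsey's cost-normalized greedy rule applied to $z$: at each step it picks $j_t\in\arg\max_{i\notin\mathcal{I}_g^t}(z(\mathcal{I}_g^t\cup\{i\})-z(\mathcal{I}_g^t))/h_i$ and stops as soon as $z(\mathcal{I}_g^T)=z([n])$. Theorem~1 of \cite{wolsey1982analysis}, specialized to a nondecreasing submodular $z$ with $z(\emptyset)=0$ and positive weights $h_i$, gives precisely the four bounds (a)--(d): bound (a) expresses the cost ratio in terms of the worst-case ratio of initial to marginal gains along the greedy trajectory, (b) is its specialization to the last and first greedy picks, (c) is the standard residual-gap form, and (d) is the integer-valued refinement yielding the harmonic-sum bound $\sum_{i=1}^M 1/i\le 1+\ln M$ with $M=\max_{j\in[n]} z(j)$. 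Since the transformation Problem~\ref{problem:SSL}$\,\to\,$\eqref{eqn:equivalent form of spl 1} preserves optima and feasibility, bounding $h(\mathcal{I}_g)$ relative to $h(\mathcal{I}^*)$ for \eqref{eqn:equivalent form of spl 1} immediately yields the same bounds for the BLDS problem.

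I do not anticipate a real obstacle here, as the proof is essentially an assembly: the submodularity analysis is already discharged by Lemmas~\ref{lemma:submodular of f prime} and~\ref{lemma:equivalent constraints}, and Wolsey's bounds apply verbatim. The only point worth being careful about is checking that the termination condition of Algorithm~\ref{algorithm:greedy} is reachable, which is exactly what Remark~\ref{remark:n sensors constraints} ensures by guaranteeing $f_{\theta_p}([n])\ge 1-\mu_0(\theta_p)/(1-R_{\theta_p})$, hence $f'_{\theta_p}([n])=1-\mu_0(\theta_p)/(1-R_{\theta_p})$ and thus $z([n])=\sum_{\theta_p\in\bar\Theta}(1-\mu_0(\theta_p)/(1-R_{\theta_p}))$ is attained by $[n]$ itself.
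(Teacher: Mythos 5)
Your proposal is correct and follows essentially the same route as the paper: both establish that $z(\cdot)$ is nondecreasing and submodular with $z(\emptyset)=0$ (via Lemma~\ref{lemma:submodular of f prime}, the truncation argument, and Lemma~\ref{lemma:equivalent constraints}), recognize \eqref{eqn:equivalent form of spl 1} as an instance of the submodular set covering problem, and then invoke Theorem~1 of \cite{wolsey1982analysis} to read off bounds (a)--(d). The paper's "proof" is exactly this assembly, stated even more tersely than yours.
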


Note that the bounds in Theorem~$\ref{thm:guarantee for greedy SPL}$(a)-(c) depend on $\mathcal{I}_g^t$ from the greedy algorithm. We can compute the bounds in Theorem~$\ref{thm:guarantee for greedy SPL}$(a)-(c) in parallel with the greedy algorithm, in order to provide a performance guarantee on the output of the algorithm. The bound in Theorem~$\ref{thm:guarantee for greedy SPL}$(d) does not depend on $\mathcal{I}_g^t$, and can be computed using $O(n)$ evaluations of function $z(\cdot)$.

\subsection{Fast greedy algorithm}\label{sec:fast greedy}
We now give an algorithm (Algorithm~$\ref{algorithm:fast greedy}$) for BLDS that achieves $O(\frac{n}{\epsilon}\ln\frac{n}{\epsilon})$ query complexity for any $\epsilon\in(0,1)$, which is significantly smaller than $O(n^2)$ as $n$ scales large. In line $3$ of Algorithm~$\ref{algorithm:fast greedy}$, $h_{\max}\triangleq\mathop{\max}_{j\in[n]}h_j$ and $h_{\min}\triangleq\mathop{\min}_{j\in[n]}h_j$. While achieving faster running times, we will show that the solution returned by Algorithm~$\ref{algorithm:fast greedy}$ has slightly worse performance bounds compared to those of Algorithm~$\ref{algorithm:greedy}$ provided in Theorem~$\ref{thm:guarantee for greedy SPL}$, and potentially slightly violates the constraint of the BLDS problem given in \eqref{eqn:equivalent form of spl 1}. Specifically, a larger value of $\epsilon$ in Algorithm~$\ref{algorithm:fast greedy}$ leads to faster running times of Algorithm~$\ref{algorithm:fast greedy}$, but yields worse performance guarantees. Moreover, note that Algorithm~$\ref{algorithm:greedy}$ adds a single element to $\mathcal{I}_g$ in each iteration of the while loop in lines $2$-$4$. In contrast, Algorithm~$\ref{algorithm:fast greedy}$ considers multiple candidate elements in each iteration of the for loop in lines $3$-$9$, and adds elements that satisfy the threshold condition given in line $5$, which leads to faster running times. Formally, we have the following result. 

\begin{algorithm}
\textbf{Input:} $[n]$, $z:2^{[n]}\to\mathbb{R}_{\ge0}$, $h_i$ $\forall i\in[n]$, $\epsilon\in(0,1)$\\
\textbf{Output:} $\mathcal{I}_f$
\caption{Fast Greedy Algorithm for BLDS}\label{algorithm:fast greedy}
\begin{algorithmic}[1]
\State $t\gets0$, $\mathcal{I}_f^0\gets\emptyset$
\State $d\gets\mathop{\max}_{i\in[n]}\frac{z(i)-z(\emptyset)}{h_i}$
\For{$(\tau=d$; $\tau\ge\frac{\epsilon h_{\mathop{\min}}}{nh_{\mathop{\max}}}d$; $\tau\gets\tau(1-\epsilon))$}
    \For{$j\in[n]$}
        \If{$\frac{z(\mathcal{I}_f^t\cup\{j\})-z(\mathcal{I}_f^t)}{h_j}\ge\tau$}
            \State $\mathcal{I}_f^{t+1}\gets\mathcal{I}_f^t\cup\{j\}$, $t\gets t+1$
        \EndIf
        \If{$z(\mathcal{I}_f^t)=z([n])$}
            \State $T\gets t$, $\mathcal{I}_f\gets\mathcal{I}_f^T$
            \State \textbf{return}\ $\mathcal{I}_f$
        \EndIf
    \EndFor
\EndFor
\State $T\gets t$, $\mathcal{I}_f\gets\mathcal{I}_f^T$
\State\textbf{return} $\mathcal{I}_f$
\end{algorithmic}
\end{algorithm}

\begin{theorem}
\label{thm:fast greedy}
Suppose $\frac{h_{\mathop{\max}}}{h_{\mathop{\min}}}\le n^H$ holds in the BLDS instances, where $h_{\max}=\mathop{\max}_{j\in[n]}h_j$, $h_{\min}=\mathop{\min}_{j\in[n]}h_j$, and $H\in\mathbb{R}_{\ge1}$ is a fixed constant. Let $\mathcal{I}^*$ be an optimal solution to the BLDS problem. For any $\epsilon\in(0,1)$, Algorithm~$\ref{algorithm:fast greedy}$ returns a solution $\mathcal{I}_f$ to the BLDS problem (i.e., \eqref{eqn:equivalent form of spl 1}) in query complexity $O(\frac{n}{\epsilon}\ln\frac{n}{\epsilon})$ that satisfies $z(\mathcal{I}_f)\ge (1-\epsilon)z([n])$, and has the following performance bounds, where $\mathcal{I}_f^{T-1}$ is given in Algorithm~$\ref{algorithm:fast greedy}$.

\noindent(a) $h(\mathcal{I}_f)\le\frac{1}{1-\epsilon}\Big(1+\ln\frac{z([n])}{z([n])-z(\mathcal{I}_f^{T-1})}\Big)h(\mathcal{I}^*)$,

\noindent(b) if $z(\mathcal{I})\in\mathbb{Z}_{\ge0}$ for all $\mathcal{I}\subseteq[n]$, $h(\mathcal{I}_f)\le\frac{1}{1-\epsilon}\big(1+\ln z([n])\big)h(\mathcal{I}^*)$.
\end{theorem}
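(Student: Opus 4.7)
The plan is to establish the four claims---query complexity, the near-feasibility bound $z(\mathcal{I}_f)\ge(1-\epsilon)z([n])$, and the two cost bounds (a), (b)---by combining Wolsey's analysis of the standard greedy algorithm for submodular set cover with the threshold-greedy device of Badanidiyuru and Vondr\'{a}k. The main structural fact I will exploit is that the geometrically spaced thresholds $d,\,d(1-\epsilon),\,d(1-\epsilon)^2,\ldots$ ensure that at the moment any element is appended its marginal gain ratio is within a $(1-\epsilon)$ factor of the best available one; this plays the role of the maximization step used in Algorithm~\ref{algorithm:greedy}.

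For the query complexity, the outer loop multiplies $\tau$ by $(1-\epsilon)$ each round and exits once $\tau<\tfrac{\epsilon h_{\min}}{nh_{\max}}d$, so the number of outer rounds is $\lceil \ln(nh_{\max}/(\epsilon h_{\min}))/(-\ln(1-\epsilon))\rceil$. Using $-\ln(1-\epsilon)\ge\epsilon$ and the hypothesis $h_{\max}/h_{\min}\le n^H$ collapses this to $O(\tfrac{1}{\epsilon}\ln\tfrac{n}{\epsilon})$; since each round queries $z$ at most $n$ times, the total is $O(\tfrac{n}{\epsilon}\ln\tfrac{n}{\epsilon})$. For the near-feasibility claim, if the algorithm returns from the inner if-block then $z(\mathcal{I}_f)=z([n])$ trivially. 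Otherwise let $\tau_\star$ be the smallest threshold used. Every $j\in[n]\setminus\mathcal{I}_f$ was tested during that last round with some intermediate set $\mathcal{I}\subseteq\mathcal{I}_f$, and the threshold condition failed, so by submodularity of $z$ the inequality $z(\mathcal{I}_f\cup\{j\})-z(\mathcal{I}_f)<\tau_\star h_j$ holds. Summing over $j\notin\mathcal{I}_f$, using $z([n])-z(\mathcal{I}_f)\le\sum_{j\notin\mathcal{I}_f}(z(\mathcal{I}_f\cup\{j\})-z(\mathcal{I}_f))$, the exit condition on $\tau_\star$, and the elementary inequality $dh_{\min}\le z([n])$ (which follows from $d=\max_i z(\{i\})/h_i$ and the existence of a maximizer with cost at least $h_{\min}$), yields the claimed bound.

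For cost bound (a), I follow Wolsey with a single extra $1/(1-\epsilon)$ factor. The key claim is that when the $(t{+}1)$-st element $j_{t+1}$ is appended under threshold $\tau_{i(t+1)}$, every $j'\in[n]\setminus\mathcal{I}_f^t$ satisfies $(z(\mathcal{I}_f^t\cup\{j'\})-z(\mathcal{I}_f^t))/h_{j'}\le \tau_{i(t+1)}/(1-\epsilon)$: either $j'$ was tested in the previous outer round (at threshold $\tau_{i(t+1)}/(1-\epsilon)$) and failed, with submodularity propagating the bound to $\mathcal{I}_f^t$, or we are in the very first outer round, in which case the bound follows from the definition of $d$. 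Combining with the threshold condition $\tau_{i(t+1)}\le (z(\mathcal{I}_f^{t+1})-z(\mathcal{I}_f^t))/h_{j_{t+1}}$ yields, for every $j'\notin\mathcal{I}_f^t$,
\begin{equation*}
\frac{z(\mathcal{I}_f^t\cup\{j'\})-z(\mathcal{I}_f^t)}{h_{j'}}\le\frac{1}{1-\epsilon}\cdot\frac{z(\mathcal{I}_f^{t+1})-z(\mathcal{I}_f^t)}{h_{j_{t+1}}}.
\end{equation*}
Summing over $j'\in\mathcal{I}^*\setminus\mathcal{I}_f^t$, applying submodularity with $z(\mathcal{I}^*)=z([n])$, and rearranging gives $h_{j_{t+1}}/(z(\mathcal{I}_f^{t+1})-z(\mathcal{I}_f^t))\le h(\mathcal{I}^*)/\left[(1-\epsilon)(z([n])-z(\mathcal{I}_f^t))\right]$. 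Summing for $t=0,\ldots,T-1$, setting $g_t:=z([n])-z(\mathcal{I}_f^t)$, and invoking the elementary telescoping bound $\sum_t(g_t-g_{t+1})/g_t\le 1+\ln(g_0/g_{T-1})$ (from $1-y\le -\ln y$) produces (a). Part (b) follows from (a): when $z$ is integer-valued, the threshold condition forces each accepted marginal gain to be at least $1$, so $z([n])-z(\mathcal{I}_f^{T-1})\ge 1$ and hence $\ln(z([n])/(z([n])-z(\mathcal{I}_f^{T-1})))\le \ln z([n])$.

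The hardest piece is the near-feasibility step, because matching the stated $(1-\epsilon)z([n])$ bound requires careful bookkeeping of the $1/(1-\epsilon)$ factor introduced by the geometric spacing of thresholds; the inequality $dh_{\min}\le z([n])$ together with the exit condition $\tau_\star(1-\epsilon)<\tfrac{\epsilon h_{\min}}{nh_{\max}}d$ is the key lever, and the constants have to be arranged carefully so that the final factor absorbs the extra $1/(1-\epsilon)$ cleanly.
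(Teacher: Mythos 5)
Your proposal is correct and, for three of the four claims (query complexity, the bound $z(\mathcal{I}_f)\ge(1-\epsilon)z([n])$, and part (b)), it matches the paper's argument essentially step for step — including the same resolution of the last-threshold issue via $h_j\le h_{\max}$, $h_{j^*}\ge h_{\min}$, and $z(j^*)\le z([n])$ (your $dh_{\min}\le z([n])$ is the same inequality in disguise). Where you genuinely diverge is in how the per-step greedy inequality is aggregated in part (a). You keep the ratio form $h_{j_{t+1}}/(z(\mathcal{I}_f^{t+1})-z(\mathcal{I}_f^t))\le h(\mathcal{I}^*)/\bigl((1-\epsilon)(z([n])-z(\mathcal{I}_f^t))\bigr)$ and sum it directly, using $\sum_{t}(g_t-g_{t+1})/g_t\le 1+\ln(g_0/g_{T-1})$ with $g_t=z([n])-z(\mathcal{I}_f^t)$; this is Wolsey's original telescoping and it bounds all of $h(\mathcal{I}_f)$ in one pass (the final summand contributes the additive $1$). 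The paper instead converts the same inequality into a multiplicative recursion on $z(\mathcal{I}^*)-z(\mathcal{I}_f^t)$, unrolls it with $\prod(1-x_t)\le e^{-\sum x_t}$ to bound $h(\mathcal{I}_f^{T-1})$ only, and then needs a separate argument (reusing the mean-value inequality at $t=T-1$) to show $h_{j_T}\le\frac{1}{1-\epsilon}h(\mathcal{I}^*)$, plus a special case for $T=1$. Your route reaches the identical bound while dispensing with both the extra lemma for the last element and the $T=1$ case, and your explicit treatment of elements never tested at a ``previous'' threshold (the first outer round, handled via the definition of $d$) is slightly more careful than the paper's. Two small quibbles: in part (b) the reason $z([n])-z(\mathcal{I}_f^{T-1})\ge 1$ is simply that this quantity is a positive integer, not that accepted marginal gains are at least $1$; and your own caveat about absorbing the extra $1/(1-\epsilon)$ from the geometric threshold spacing in the near-feasibility step is well taken — the paper sidesteps it by assuming the final threshold equals $\frac{\epsilon h_{\min}}{nh_{\max}}d$ exactly, so neither write-up fully closes that gap.
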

\begin{proof}
Consider any $\epsilon\in(0,1)$. We first show that the query complexity of Algorithm~$\ref{algorithm:fast greedy}$ is $O(\frac{n}{\epsilon}\ln\frac{n}{\epsilon})$. Note that the for loop in lines $3$-$9$ runs for at most $K_{\mathop{\max}}\triangleq\lceil\frac{1}{-\ln(1-\epsilon)}\cdot(\ln\frac{n}{\epsilon}+\ln\frac{h_{\mathop{\max}}}{h_{\mathop{\min}}})\rceil$ iterations, where each iteration requires $O(n)$ evaluations of $z(\cdot)$. One can also show that $-\ln(1-\epsilon)-\epsilon>0$ for $\epsilon\in(0,1)$, which implies $K_{\mathop{\max}}\le\frac{1}{\epsilon}\cdot(\ln\frac{n}{\epsilon}+H\ln n)+1\le\frac{1}{\epsilon}((H+1)\ln\frac{n}{\epsilon}+1)$, where $H\in\mathbb{R}_{\ge1}$ is a fixed constant. It then follows from the above arguments that the query complexity of Algorithm~$\ref{algorithm:fast greedy}$ is $O(\frac{n}{\epsilon}\ln\frac{n}{\epsilon})$.

Next, we show that $\mathcal{I}_f$ satisfies $z(\mathcal{I}_f)\ge (1-\epsilon)z([n])$. Note that if Algorithm~$\ref{algorithm:fast greedy}$ ends with line $9$, then $z(\mathcal{I}_f)=z([n])$ and thus $z(\mathcal{I}_f)\ge(1-\epsilon)z([n])$ hold. Hence, we assume that Algorithm~$\ref{algorithm:fast greedy}$ ends with $\tau=\frac{\epsilon h_{\mathop{\min}}}{n h_{\mathop{\max}}}d$ in the for loop in lines $3$-$9$. Also note that $z(\emptyset)=0$. Denoting $j^*\in\mathop{\arg}{\max}_{i\in[n]}\frac{z(i)-z(\emptyset)}{h_i}$ and considering any $j\in[n]\setminus\mathcal{I}_f$, we have from the definition of Algorithm~$\ref{algorithm:fast greedy}$ the following:
\begin{align}\nonumber
&\frac{z(\mathcal{I}_f\cup\{j\})-z(\mathcal{I}_f)}{h_j}<\frac{\epsilon h_{\mathop{\min}}z(j^*)}{n h_{\mathop{\max}}h_{j^*}},\\
\implies&z(\mathcal{I}_f\cup\{j\})-z(\mathcal{I}_f)<\frac{\epsilon}{n}z(j^*)\le\frac{\epsilon}{n}z([n]),\label{eqn:single margin}
\end{align}
where we use the facts $h_j\le h_{\mathop{\max}}$ and $h_{j^*}\ge h_{\mathop{\min}}$ to obtain the first inequality in \eqref{eqn:single margin}, and use the fact that $z(\cdot)$ is monotone nondecreasing to obtain the second inequality in \eqref{eqn:single margin}. Since \eqref{eqn:single margin} holds for all  $j\in[n]\setminus\mathcal{I}_f$, it follows that
\begin{equation*}
\sum_{j\in[n]\setminus\mathcal{I}_f}\big(z(\mathcal{I}_f\cup\{j\})-z(\mathcal{I}_f)\big)<\epsilon z([n])
\implies z([n])-z(\mathcal{I}_f)<\epsilon z([n]),
\end{equation*}
where we use the submodularity of $z(\cdot)$ (i.e., \eqref{eqn:submodular def 2} in Definition~$\ref{def:submodular}$).

We now prove part (a). Denote $\mathcal{I}_f^t=\{j_1,\dots,j_t\}\subseteq[n]$ for all $t\in[T]$ with $\mathcal{I}_f^0=\emptyset$ in Algorithm~$\ref{algorithm:fast greedy}$. First, suppose $T\ge2$. Considering any $t\in[T-1]$, we have from line $5$ in Algorithm~$\ref{algorithm:fast greedy}$:
\begin{equation}
\frac{z(\mathcal{I}_f^t\cup\{j_{t+1}\})-z(\mathcal{I}_f^t)}{h_{j_{t+1}}}\ge\tau.\label{eqn:greedy choice 1}
\end{equation}
Moreover, consider any $j\in[n]\setminus\mathcal{I}_f^t$. Since $j$ has not been added to $\mathcal{I}_f^t$ while the current threshold   is $\tau$, one can see that $j$ does not satisfy the threshold condition in line $5$ when the threshold was $\frac{\tau}{1-\epsilon}$, i.e.,
\begin{equation}
\frac{z(\mathcal{I}_f^{t'}\cup\{j\})-z(\mathcal{I}_f^{t'})}{h_j}\le\frac{\tau}{1-\epsilon}\implies\frac{z(\mathcal{I}_f^t\cup\{j\})-z(\mathcal{I}_f^t)}{h_j}\le\frac{\tau}{1-\epsilon}, \label{eqn:greedy choice 2}
\end{equation}
where $t'\in\{0,\dots,T-1\}$ with $t'<t$ is a corresponding time step in Algorithm~$\ref{algorithm:fast greedy}$ when the threshold was $\frac{\tau}{1-\epsilon}$. Note that we obtain the second inequality in \eqref{eqn:greedy choice 2} using again the submodularity of $z([n])$ (i.e., \eqref{eqn:submodular def 1} in Definition~$\ref{def:submodular}$). Combining \eqref{eqn:greedy choice 1} and \eqref{eqn:greedy choice 2}, we have 
\begin{equation}
\frac{z(\mathcal{I}_f^t\cup\{j_{t+1}\})-z(\mathcal{I}_f^t)}{h_{j_{t+1}}}\ge\frac{(1-\epsilon)(z(\mathcal{I}_f^t\cup\{j\})-z(\mathcal{I}_f^t))}{h_j}.\label{eqn:greedy choice 3}
\end{equation} 
Noting that \eqref{eqn:greedy choice 3} holds for all $j\in\mathcal{I}^*\setminus\mathcal{I}_f^t$, one can show that
\begin{equation}
\label{eqn:mean lower bound}
\frac{z(\mathcal{I}_f^t\cup\{j_{t+1}\})-z(\mathcal{I}_f^t)}{h_{j_{t+1}}}\ge\frac{(1-\epsilon)\sum_{j\in\mathcal{I}^*\setminus\mathcal{I}_f^t}(z(\mathcal{I}_f^t\cup\{j\})-z(\mathcal{I}_f^t))}{\sum_{j\in\mathcal{I}^*\setminus\mathcal{I}_f^t}h_j},
\end{equation}
which further implies, via the fact that $z(\cdot)$ is submodular and monotone nondecreasing, the following:
\begin{equation}
\label{eqn:submodular lower bound}
\frac{z(\mathcal{I}_f^t\cup\{j_{t+1}\})-z(\mathcal{I}_f^t)}{h_{j_{t+1}}}\ge\frac{(1-\epsilon)(z(\mathcal{I}^*\cup\mathcal{I}_f^t)-z(\mathcal{I}_f^t))}{h(\mathcal{I}^*\setminus\mathcal{I}_f^t)}\ge\frac{(1-\epsilon)(z(\mathcal{I}^*)-z(\mathcal{I}_f^t))}{h(\mathcal{I}^*)}.
\end{equation}
Rearranging the terms in \eqref{eqn:submodular lower bound}, we have
\begin{align}\nonumber
&z(\mathcal{I}^*)-z(\mathcal{I}_f^t)\le\frac{h(\mathcal{I}^*)}{1-\epsilon}\cdot\frac{z(\mathcal{I}^*)-z(\mathcal{I}_f^t)-(z(\mathcal{I}^*)-z(\mathcal{I}_f^{t+1}))}{h_{j_{t+1}}},\\
\implies&z(\mathcal{I}^*)-z(\mathcal{I}_f^{t+1})\le(1-\frac{(1-\epsilon)h_{j_t+1}}{h(\mathcal{I}^*)})(z(\mathcal{I}^*)-z(\mathcal{I}_f^t)).\label{eqn:recursion}
\end{align}
Moreover, we see from the above arguments that \eqref{eqn:recursion} holds for all $t\in[T-1]$. Now, considering $t=0$ and using similar arguments to those above, we can show that \eqref{eqn:greedy choice 3} and thus \eqref{eqn:recursion} also hold. Therefore, viewing \eqref{eqn:recursion} as a recursion of $z(\mathcal{I}^*)-z(\mathcal{I}_f^t)$ for $t\in\{0,\dots,T-1\}$, we obtain the following:
\begin{equation}
\label{eqn:unroll recursion}
z(\mathcal{I}^*)-z(\mathcal{I}_f^{T-1})\le(z(\mathcal{I}^*)-z(\mathcal{I}_f^0))\prod_{t=1}^{T-1}\Big(1-\frac{h_{j_t}(1-\epsilon)}{h(\mathcal{I}^*)}\Big).
\end{equation}
Furthermore, one can show that $\prod_{t=1}^{T-1}\big(1-\frac{h_{j_t}(1-\epsilon)}{h(\mathcal{I}^*)}\big)\le\big(1-\frac{h(\mathcal{I}_f^{T-1})(1-\epsilon)}{(T-1)h(\mathcal{I}^*)}\big)^{T-1}\le e^{-(1-\epsilon)\frac{h(\mathcal{I}_f^{T-1})}{h(\mathcal{I}^*)}}$ (e.g., \cite{khuller1999budgeted}). Since $z(\mathcal{I}_f^0)=z(\emptyset)=0$ and $z(\mathcal{I}^*)=z([n])$, it then follows from \eqref{eqn:unroll recursion} that
\begin{align}\nonumber
&z(\mathcal{I}^*)-z(\mathcal{I}_f^{T-1})\le z(\mathcal{I}^*)e^{-(1-\epsilon)\frac{h(\mathcal{I}_f^{T-1})}{h(\mathcal{I}^*)}},\\\nonumber
\implies&\ln(z([n]-z(\mathcal{I}_f^{T-1})))\le-(1-\epsilon)\frac{h(\mathcal{I}_f^{T-1})}{h(\mathcal{I}^*)}+\ln z([n]),\\
\implies&h(\mathcal{I}_f^{T-1})\le\frac{1}{1-\epsilon}\ln\frac{z([n])}{z([n])-z(\mathcal{I}_f^{T-1})}h(\mathcal{I}^*),\label{eqn:cost sum of T-1}
\end{align}
where we note that $z([n])-z(\mathcal{I}_f^{T-1})>0$, since $z(\cdot)$ is monotone nondecreasing and $z(\mathcal{I}_f^{T-1})\neq z([n])$. In order to prove part (a) (for $T\ge2$), it remains to show that $h_{j_T}\le\frac{1}{1-\epsilon} h(\mathcal{I}^*)$, which together with \eqref{eqn:cost sum of T-1} yield the bound in part (a). We can now use \eqref{eqn:mean lower bound} with $t=T-1$ to obtain
\begin{align}\nonumber
h_{j_T}&\le\frac{h(\mathcal{I}^*\setminus\mathcal{I}_f^{T-1})}{1-\epsilon}\cdot\frac{z(\mathcal{I}_f^T)-z(\mathcal{I}_f^{T-1})}{\sum_{j\in\mathcal{I}^*\setminus\mathcal{I}_f^{T-1}}(z(\mathcal{I}_f^{T-1}\cup\{j\})-z(\mathcal{I}_f^{T-1}))}\\
&\le\frac{h(\mathcal{I}^*)}{1-\epsilon}\cdot\frac{z(\mathcal{I}_f^T)-z(\mathcal{I}_f^{T-1})}{z(\mathcal{I}_f^{T-1}\cup\mathcal{I}^*)-z(\mathcal{I}_f^{T-1})},\label{eqn:cost of T}
\end{align}
where \eqref{eqn:cost of T} follows from the submodularity of $z(\cdot)$. Since $z(\mathcal{I}_f^{T})\le z(\mathcal{I}_f^{T-1}\cup\mathcal{I}^*)$ from the facts that $z(\mathcal{I}^*)=z([n])$ and $z(\cdot)$ is monotone nondecreasing, we see from \eqref{eqn:cost of T} that $h_{j_T}\le\frac{1}{1-\epsilon} h(\mathcal{I}^*)$. 

Next, suppose $T=1$, i.e., $\mathcal{I}_f=j_1$. We will show that $h(\mathcal{I}^*)=h(\mathcal{I}_g)$. Noting from the definition of Algorithm~$\ref{algorithm:fast greedy}$ that $j_1\in\mathop{\arg}{\max}_{i\in[n]}\frac{z(i)-z(\emptyset)}{h_i}$, we have
\begin{equation*}
\frac{z(j_1)}{h_{j_1}}\ge\frac{z(j)}{h_j},\ \forall j\in\mathcal{I}^*.
\end{equation*} 
It then follows from similar arguments to those for \eqref{eqn:mean lower bound} and \eqref{eqn:submodular lower bound} that 
\begin{equation*}
\frac{z(j_1)}{h_{j_1}}\ge\frac{\sum_{j\in\mathcal{I}^*}z(j)}{\sum_{j\in\mathcal{I}^*}h_j}\ge\frac{z(\mathcal{I}^*)}{h(\mathcal{I}^*)},
\end{equation*}
which implies
\begin{equation*}
\frac{h(\mathcal{I}_f)}{h(\mathcal{I}^*)}\le\frac{z(\mathcal{I}_f)}{z(\mathcal{I}^*)}\le1,
\end{equation*}
where we use the fact $z(\mathcal{I}_f)\le z(\mathcal{I}^*)$, since $z(\cdot)$ is monotone nondecreasing with $z(\mathcal{I}^*)=z([n])$. Thus, we have $h(\mathcal{I}_f)\le h(\mathcal{I}^*)$. Noting that $h(\mathcal{I}^*)\le h(\mathcal{I}_g)$ always holds due to the fact that $\mathcal{I}^*$ is an optimal solution, we conclude that $h(\mathcal{I}^*)=h(\mathcal{I}_g)$. This completes the proof of part (a).

Part (b) now follows directly from part (a) by noting that $z([n])-z(\mathcal{I}_f^{T-1})\ge1$, since $z([n])-z(\mathcal{I}_f^{T-1})>0$ and $z(\mathcal{I})\in\mathbb{Z}_{\ge1}$ for all $\mathcal{I}\subseteq[n]$.
\end{proof}

\begin{remark}
The threshold-based greedy algorithm has also been proposed for the problem of maximizing a monotone nondecreasing submodular function subject to a cardinality constraint (e.g., \cite{badanidiyuru2014fast}). The threshold-based greedy algorithm proposed in \cite{badanidiyuru2014fast} improves the running times of the standard greedy algorithm proposed in \cite{nemhauser1978analysis}, and achieves a comparable performance guarantee to that of the standard greedy algorithm in \cite{nemhauser1978analysis}. Here, we propose a threshold-based greedy algorithm (Algorithm~$\ref{algorithm:fast greedy}$) to solve the submodular set covering problem, which improves the running times of the standard greedy algorithm for the submodular set covering problem proposed in \cite{wolsey1982analysis} (i.e., Algorithm~$\ref{algorithm:fast greedy}$), and achieves comparable performances guarantees as we showed in Theorem~$\ref{thm:fast greedy}$.
\end{remark}

\subsection{Interpretation of Performance Bounds}\label{sec:illustrative examples}
Here, we give an illustrative example to interpret the performance bounds of Algorithm~$\ref{algorithm:greedy}$ and Algorithm~$\ref{algorithm:fast greedy}$ given in Theorem~$\ref{thm:guarantee for greedy SPL}$ and Theorem~$\ref{thm:fast greedy}$, respectively. In particular, we focus on the bounds given in Theorem~$\ref{thm:guarantee for greedy SPL}$(d) and Theorem~$\ref{thm:fast greedy}$(b). Consider an instance of the BLDS problem, where we set $\mu_0(\theta_p)=\frac{1}{m}$ for all $\theta_p\in\Theta$ with $m=|\Theta|$. In other words, there is a uniform prior belief over the states in $\Theta=\{\theta_1,\dots,\theta_m\}$. Moreover, we set the error bounds $R_{\theta_p}=\frac{R}{m}$ for all $\theta_p\in\Theta$, where $R\in\mathbb{Z}_{\ge0}$ and $R<m-1$. Recalling that $\bar{\Theta}=\{\theta_p\in\Theta: 0\le R_{\theta_p}<1-\mu_0(\theta_p)\}$ and noting the definition of $z(\cdot)$ in Eq.~\eqref{eqn:def of z}, for all $\mathcal{I}\subseteq[n]$, we define
\begin{equation}
\label{eqn:def of z prime}
z'(\mathcal{I})\triangleq m(m-R)z(\mathcal{I})=m(m-R)\sum_{\theta_p\in\Theta}f'_{\theta_p}(\mathcal{I}).
\end{equation}
One can check that $z'(\mathcal{I})\in\mathbb{Z}_{\ge0}$ for all $\mathcal{I}\subseteq[n]$. Moreover, one can show that \eqref{eqn:equivalent form of spl 1} can be equivalently written as 
\begin{equation}
\label{eqn:equivalent form of spl 2}
\begin{split}
&\mathop{\min}_{\mathcal{I}\subseteq[n]} h(\mathcal{I})\\
&s.t.\ z'(\mathcal{I})=z'([n]).
\end{split}
\end{equation}
Noting that $M'\triangleq\mathop{\max}_{j\in[n]}z'(j)\le m^2(m-R)$ from \eqref{eqn:def of z prime}, we then see from Theorem~$\ref{thm:guarantee for greedy SPL}$(d) that applying Algorithm~$\ref{algorithm:greedy}$ to \eqref{eqn:equivalent form of spl 2} yields the following performance bound:
\begin{equation}
\label{eqn:ex bound for greedy}
h(\mathcal{I}_g)\le\Big(\sum_{i=i}^{M'}\frac{1}{i}\Big)h(\mathcal{I}^*)\le\big(1+\ln M')h(\mathcal{I}^*)\le(1+2\ln m+\ln(m-R)\big)h(\mathcal{I}^*).
\end{equation}
Similarly, since $z'([n])\le m^2(m-R)$ also holds, Theorem~$\ref{thm:fast greedy}$(b) implies the following performance bound for Algorithm~$\ref{algorithm:fast greedy}$ when applied to \eqref{eqn:equivalent form of spl 2}:
\begin{equation}
\label{eqn:ex bound for fast greedy}
h(\mathcal{I}_f)\le\frac{1}{1-\epsilon}\big(1+\ln z'([n])\big)h(\mathcal{I}^*)\le\frac{1}{1-\epsilon}(1+2\ln m+\ln(m-R)\big)h(\mathcal{I}^*),
\end{equation}
where $\epsilon\in(0,1)$. Again, we note from Theorem~$\ref{thm:fast greedy}$ that a smaller value of $\epsilon$ yields a tighter performance bound for Algorithm~$\ref{algorithm:fast greedy}$ (according to \eqref{eqn:ex bound for fast greedy}) at the cost of slower running times. Thus, supposing $m$ and $\epsilon$ are fixed, we see from \eqref{eqn:ex bound for greedy} and \eqref{eqn:ex bound for fast greedy} that the performance bounds of Algorithm~$\ref{algorithm:greedy}$ and Algorithm~$\ref{algorithm:fast greedy}$ become tighter as $R$ increases, i.e., as the error bound $R_{\theta_p}$ increases. On the other hand, supposing $R$ and $\epsilon$ are fixed, we see from \eqref{eqn:ex bound for greedy} and \eqref{eqn:ex bound for fast greedy} that the performance bounds of Algorithm~$\ref{algorithm:greedy}$ and Algorithm~$\ref{algorithm:fast greedy}$ become tighter as $m$ decreases, i.e., as the number of possible states of the world decreases.

Finally, we note that the performance bounds given in Theorem~$\ref{thm:guarantee for greedy SPL}$ are {\it worst-case} performance bounds for Algorithm~$\ref{algorithm:greedy}$. Thus, in practice the ratio between a solution returned by the algorithm and an optimal solution can be {\it smaller} than the ratio predicted by Theorem~$\ref{thm:guarantee for greedy SPL}$. Nevertheless, there may also exist instances of the BLDS problem that let Algorithm~$\ref{algorithm:greedy}$ return a solution that meets the worst-case performance bound. Moreover, instances with tighter performance bounds (given by Theorem~$\ref{thm:guarantee for greedy SPL}$) potentially imply better performance of the algorithm when applied to those instances, as we can see from the above discussions and the numerical examples that will be provided in the next section. Therefore, the performance bounds given in Theorem~$\ref{thm:guarantee for greedy SPL}$ also provide insights into how different problem parameters of BLDS influence the actual performance of Algorithm~$\ref{algorithm:greedy}$. Similar arguments also hold for Algorithm~$\ref{algorithm:fast greedy}$ and the corresponding performance bounds given in Theorem~$\ref{thm:fast greedy}$.

\subsection{Numerical examples}
In this section, we focus on validating Algorithms~$\ref{algorithm:greedy}$ (resp., Algorithm $\ref{algorithm:fast greedy}$), and the performance bounds provided in Theorem~$\ref{thm:guarantee for greedy SPL}$ (resp., Theorem~$\ref{thm:fast greedy}$) using numerical examples constructed as follows. First, the total number of data sources is set to be $10$, and the selection cost $h_i$ is drawn uniformly from $[10]$ for all $i\in[n]$. The cost structure is then fixed in the sequel. Similarly to Section~$\ref{sec:illustrative examples}$, we consider BLDS instances where $\mu_0(\theta_p)=\frac{1}{m}$ for all $\theta_p\in\Theta$ with $m=|\Theta|$, and $R_{\theta_p}=\frac{R}{m}$ for all $\theta_p\in\Theta$ with $R\in\mathbb{Z}_{>0}$ and $R<m-1$. Specifically, we set $m=15$ and range $R$ from $0$ to $13$. For each $R\in\{0,1,\dots,13\}$, we further consider $500$ corresponding randomly generated instances of the BLDS problem, where for each BLDS instance we randomly generate the set $F_{\theta_p}^c(i)$ (i.e., the set of states that can be distinguished from $\theta_p$ given data source $i$) for all $i\in[n]$ and for all $\theta_p\in\Theta$.\footnote{Note that in the BLDS problem (Problem~$\ref{problem:SSL}$), the signal structure of each data source $i\in[n]$ is specified by the likelihood functions $\ell_i(\cdot|\theta_p)$ for all $\theta_p\in\Theta$. As we discussed in previous sections, \eqref{eqn:problem 1 obj} in Problem~$\ref{problem:SSL}$ can be equivalently written as \eqref{eqn:equivalent form of spl 1}, where one can further note that the function $z(\cdot)$ does not directly depend on any likelihood function $\ell_i(\cdot|\theta_p)$, and can be (fully) specified given $F_{\theta_p}^c(i)$ for all $i\in[n]$ and for all $\theta_p\in\Theta$. Thus, when constructing the BLDS instances in this section, we directly construct $F_{\theta_p}^c(i)$ for all $i\in[n]$ and for all $\theta_p\in\Theta$ in a random manner.} 

In Fig.~$\ref{fig:histograms}$ and Fig~$\ref{fig:performance bound}$, we showcase the results corresponding to Algorithm~$\ref{algorithm:greedy}$ when applied to solve the random BLDS instances generated above. Specifically, in Fig.~$\ref{fig:histograms}$, we plot histograms of the ratio $h(\mathcal{I}_g)/h(\mathcal{I}^*)$ for $R=1$, $R=5$ and $R=10$, where $\mathcal{I}_g$ is the solution returned by Algorithm~$\ref{algorithm:greedy}$ and $\mathcal{I}^*$ is an optimal solution to BLDS. We see from Fig.~$\ref{fig:histograms}$ that Algorithm~$\ref{algorithm:greedy}$ works well on the randomly generated BLDS instances, as the values of $h(\mathcal{I}_g)/h(\mathcal{I}^*)$ are close to $1$. Moreover, we see from Fig.~$\ref{fig:histograms}$ that as $R$ increases, Algorithm~$\ref{algorithm:greedy}$ yields better overall performance for the $500$ randomly generated BLDS instances. Now, from the way we set $\mu_0(\theta_p)$ and $R_{\theta_p}$ in the BLDS instances constructed above, we see from the arguments in Section~$\ref{sec:illustrative examples}$ that the performance bound for Algorithm~$\ref{algorithm:greedy}$ given by Theorem~$\ref{thm:guarantee for greedy SPL}$(d) can be written as $h(\mathcal{I}_g)\le\big(1+\ln M')h(\mathcal{I}^*)$, where $M'=\mathop{\max}_{j\in[n]}z'(j)$ and $z'(\cdot)$ is defined in \eqref{eqn:def of z prime}. Thus, in Fig.~$\ref{fig:performance bound}$, we plot the performance bound of Algorithm~$\ref{algorithm:greedy}$, i.e., $1+\ln M'$, for $R$ ranging from $0$ to $13$. Also note that for each $R\in\{0,1,\dots,13\}$, we obtain the averaged value of $1+\ln M'$ over $500$ random BLDS instances as we constructed above. We then see from Fig.~$\ref{fig:performance bound}$ that the value of the performance bound of Algorithm~$\ref{algorithm:greedy}$ decreases, i.e., the performance bound becomes tighter, as $R$ increases from $0$ to $13$. Since the performance bound in Theorem~$\ref{thm:guarantee for greedy SPL}$(d) is the worst-case guarantee, Algorithm~$\ref{algorithm:greedy}$ achieves better performance than that predicted by the bound. However, as we mentioned in Section~$\ref{sec:illustrative examples}$, the behavior of the performance bound aligns with the actual performance of Algorithm~$\ref{algorithm:greedy}$ presented in Fig.~$\ref{fig:histograms}$, i.e., a tighter performance bound implies a better overall performance of the algorithm on the $500$ random BLDS instances.

\begin{figure}[htbp]
\centering
\subfloat[a][$R=1$.]{
\includegraphics[width=0.3\linewidth]{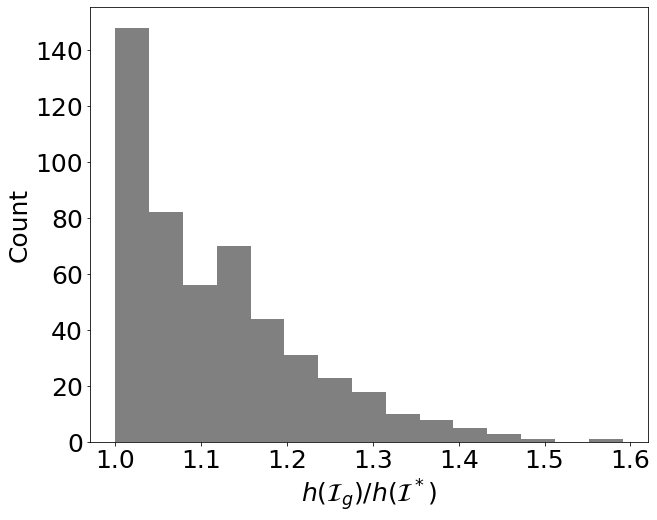}} 
\subfloat[b][$R=5$.]{
\includegraphics[width=0.3\linewidth]{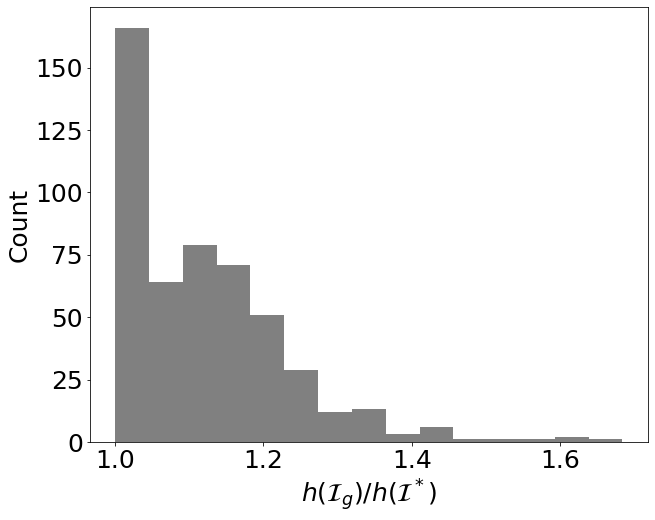}} 
\subfloat[c][$R=10$.]{
\includegraphics[width=0.3\linewidth]{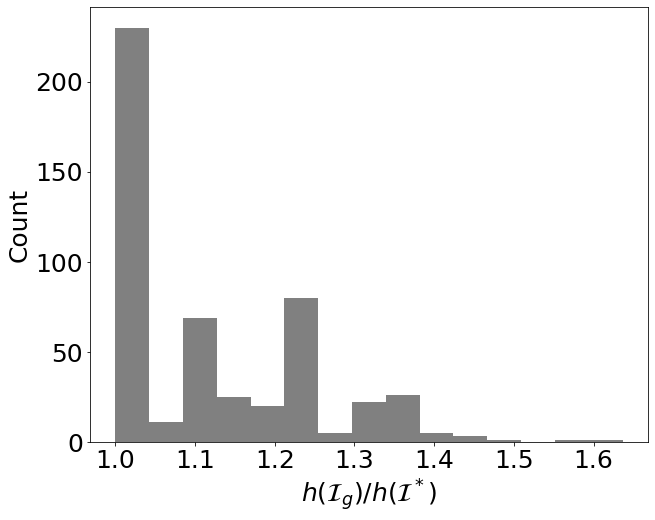}} 
\caption{Histograms of the ratio $h(\mathcal{I}_g)/h(\mathcal{I}^*)$.}
\label{fig:histograms}
\end{figure}

\begin{figure}[htbp]
    \centering
    \includegraphics[width=0.3\linewidth]{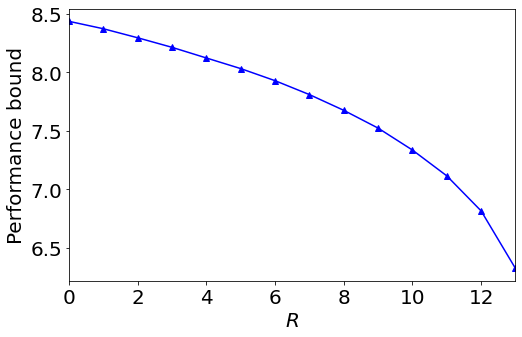} 
    \caption{Performance bound for Algorithm~$\ref{algorithm:greedy}$ given by Theorem~$\ref{thm:guarantee for greedy SPL}$(d).}
\label{fig:performance bound}
\end{figure}

Similarly, we plot the results corresponding to Algorithm~$\ref{algorithm:fast greedy}$ when applied to the $500$ randomly generated BLDS instances as we described above. In addition, we set $\epsilon=0.1$ in Algorithm~$\ref{algorithm:fast greedy}$. Again, we observe from the histograms in Fig.~$\ref{fig:histograms f}$ that Algorithm~$\ref{algorithm:fast greedy}$ works well on the randomly generated BLDS instances, and that as $R$ increases, Algorithm~$\ref{algorithm:fast greedy}$ yields better over all performance for the $500$ randomly generated BLDS instances. Here, we also note from the histogram in Fig.~$\ref{fig:histograms f}$(b) that the ratio $h(\mathcal{I}_f)/h(\mathcal{I}^*)$ may be smaller than $1$ for {\it certain} BLDS instances (where recall that $h(\mathcal{I}_f)$ is the cost of the solution $\mathcal{I}_f$ returned by Algorithm~\ref{algorithm:fast greedy}). This is because the solution $\mathcal{I}_f$ returned by Algorithm~\ref{algorithm:fast greedy} only satisfies $z(\mathcal{I}_f)\ge(1-\epsilon) z([n])$ (where $z([n])=z(\mathcal{I}^*)$) as we argued in Theorem~$\ref{algorithm:fast greedy}$, which potentially implies that $z(\mathcal{I}_f)<z(\mathcal{I}^*)$ and $h(\mathcal{I}_f)/h(\mathcal{I}^*)<1$. Nonetheless, we observe from our experiments that for more than $99\%$ of the $1500$ random BLDS instances (with $R\in\{1,5,10\}$), the constraint $z(\mathcal{I}_f)=z([n])$ is satisfied. Moreover, we have from the arguments in Section~$\ref{sec:illustrative examples}$ that the performance bound for Algorithm~$\ref{algorithm:fast greedy}$ given by Theorem~$\ref{thm:fast greedy}$(b) can be written as $h(\mathcal{I}_f)\le\frac{1}{1-\epsilon}(1+\ln z'([n]))h(\mathcal{I}_*)$, where $z'(\cdot)$ is defined in \eqref{eqn:def of z prime} and we set $\epsilon=0.1$. In Fig.~\ref{fig:performance bound f}, we plot the performance bound of Algorithm~$\ref{algorithm:fast greedy}$, i.e., $\frac{1}{1-\epsilon}(1+\ln z'([n]))$, averaged over the $500$ random BLDS instances, for $R$ ranging from $0$ to $13$. We also see from Fig.~$\ref{fig:performance bound f}$ that the value of the performance bound of Algorithm~$\ref{algorithm:fast greedy}$ decreases, i.e., the performance bound becomes tighter, as $R$ increases from $0$ to $13$. Although the performance bound in Theorem~$\ref{thm:fast greedy}$ is still a worst-case guarantee, the behavior of the bound again aligns with the actual performance of Algorithm~$\ref{algorithm:fast greedy}$ presented in Fig.~$\ref{fig:histograms f}$, i.e., a tighter performance bound implies a better overall performance of the algorithm on the $500$ random BLDS instances. 

\begin{figure}[htbp]
\centering
\subfloat[a][$R=1$.]{
\includegraphics[width=0.3\linewidth]{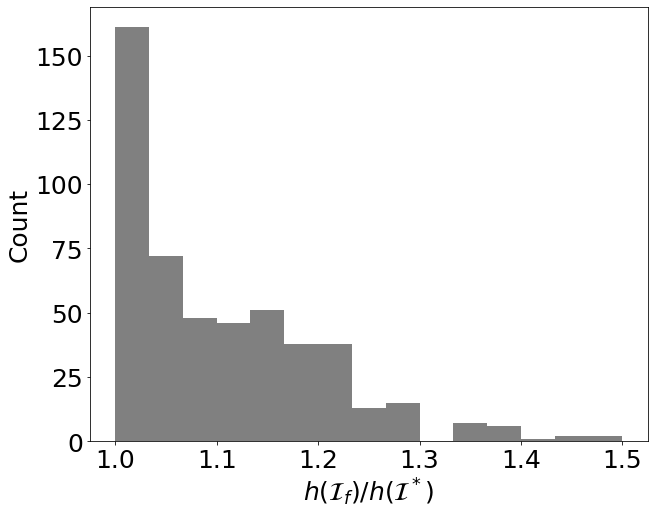}} 
\subfloat[b][$R=10$.]{
\includegraphics[width=0.3\linewidth]{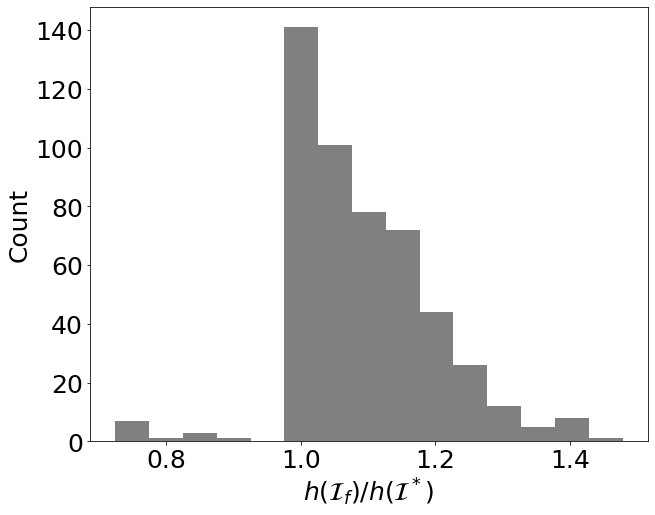}} 
\subfloat[c][$R=10$.]{
\includegraphics[width=0.3\linewidth]{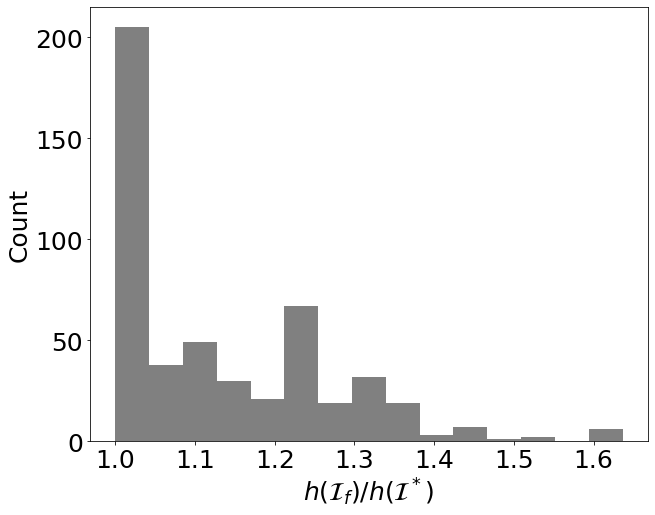}} 
\caption{Histograms of the ratio $h(\mathcal{I}_f)/h(\mathcal{I}^*)$.}
\label{fig:histograms f}
\end{figure}

\begin{figure}[htbp]
    \centering
    \includegraphics[width=0.3\linewidth]{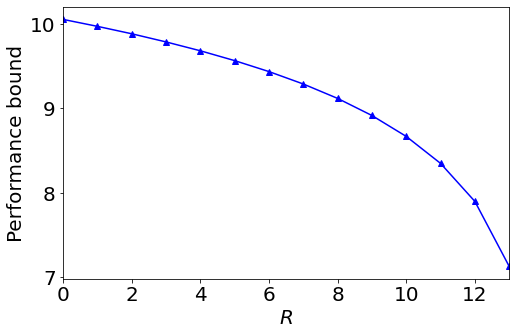} 
    \caption{Performance bound for Algorithm~$\ref{algorithm:fast greedy}$ given by Theorem~$\ref{thm:fast greedy}$(b).}
\label{fig:performance bound f}
\end{figure}

Putting the above results and discussions together, both of Algorithms~$\ref{algorithm:greedy}$ and $\ref{algorithm:fast greedy}$ achieve good performance for the randomly generated BLDS instances, while Algorithm~$\ref{algorithm:fast greedy}$ achieves faster running times as we discussed in Section~$\ref{sec:fast greedy}$. Moreover, while the performance bound given in Theorem~$\ref{thm:guarantee for greedy SPL}$(d) for Algorithm~$\ref{algorithm:greedy}$ is tighter than that given in Theorem~$\ref{thm:fast greedy}$(b), both of the bounds provide insights into how the problem parameters of BLDS (e.g., the error bound $R$) influence the actual performance of the algorithms as we discussed above.

\section{Conclusion}
In this work, we considered the problem of data source selection for Bayesian learning. We first proved that the data source selection problem for Bayesian learning is NP-hard.  Next, we showed that the data source selection problem can be transformed into an instance of the submodular set covering problem, and can then be solved using a standard greedy algorithm with provable performance guarantees. We also proposed a fast greedy algorithm that improves the running times of the standard greedy algorithm, while achieving comparable performance guarantees. The fast greedy algorithm can be applied to solve the general submodular set covering problem. We showed that the performance bounds provide insights into the actual performances of the algorithms under different instances of the data source selection problem. Finally, we validated our theoretical analysis using numerical examples, and showed that the greedy algorithms work well in practice.

\bibliography{bibliography}

\section{Appendix}
\subsection{Extension to Non-Bayesian Learning}
\label{sec:problem formulation for nonBayesian}
Let us consider a scenario where there is a set of agents, denoted as $[n]$, who wish to {\it collaboratively} learn the true state of the world. The agents interact over a directed graph $\mathcal{G}=([n],\mathcal{E})$, where each vertex in $[n]$ corresponds to an agent and a directed edge $(j,i)\in\mathcal{E}$ indicates that agent $i$ can (directly) receive information from agent $j$. Denote $\mathcal{N}_i\triangleq\{j:(j,i)\in\mathcal{E}, j\neq i\}$ as the set of neighbors of agent $i$. Suppose each agent has an associated data source with the same observation model as described in Section \ref{sec:problem formulation for Bayesian}. Specifically, the observation (conditioned on the state $\theta\in\Theta$) provided by the data source at agent $i$ at time step $k\in\mathbb{Z}_{\ge1}$ is denoted as $\omega_{i,k}\in S_i$, which is generated by the likelihood function $\ell_i(\cdot|\theta)$. Each agent $i\in[n]$ is assumed to know $\ell_i(\cdot|\theta)$ for all $\theta\in\Theta$. Similarly, we consider the scenario where using the data source of agent $i\in[n]$ incurs a cost denoted as $h_i\in\mathbb{R}_{>0}$ for all $i\in[n]$, and there is a (central) designer who can select a subset $\mathcal{I}\subseteq[n]$ of agents whose data sources will be used to collect observations. We assume that the designer knows $\ell_i(\cdot|\theta)$ for all $i\in[n]$ and for all $\theta\in\Theta$. After set $\mathcal{I}\subseteq[n]$ is selected, each agent $i\in[n]$ updates its belief of the state of the world, denoted as $\mu^{\mathcal{I}}_{i,k}(\cdot)$, using the following distributed non-Bayesian learning rule as described in  \cite{nedic2017fast}:
\begin{equation}
\label{eqn:nonbayesian rule 1}
\mu_{i,k+1}^{\mathcal{I}}(\theta)=\frac{\prod_{j=1}^n(\mu_{j,k}^{\mathcal{I}}(\theta))^{a_{ij}}\ell_i(\omega_{i,k+1}|\theta)}{\sum_{\theta_p\in\Theta}\prod_{j=1}^n(\mu_{j,k}^{\mathcal{I}}(\theta_p))^{a_{ij}}\ell_i(\omega_{i,k+1}|\theta_p)}\ \forall\theta\in\Theta,
\end{equation}
where $\mu_{i,k}^{\mathcal{I}}(\theta)$ is the belief of agent $i$ that $\theta$ is the true state at time step $k$ when the set of sources given by $\mathcal{I}\subseteq[n]$ is selected, and $a_{ij}$ is the weight that agent $i\in[n]$ assigns to an agent $j\in\mathcal{N}_i\cup\{i\}$. Specifically, for any two distinct agents $i,j\in[n]$, $a_{ij}>0$ if agent $i$ receives information from agent $j$ and $a_{ij}=0$ otherwise, where $\sum_{j\in\mathcal{N}_i\cup\{i\}}a_{ij}=1$. Note that if agent $i\notin\mathcal{I}$, i.e., the data source of agent $i$ is not selected to collect observations, we set $\ell_i(s_i|\theta_p)=\ell_i(s_i|\theta_q)$ for all $\theta_p,\theta_q\in\Theta$ and for all $s_i\in S_i$. Similarly, for any $i\in[n]$, the initial belief is set to be $\mu_{i,0}^{\mathcal{I}}(\theta)=\mu_{i,0}(\theta)$ for all $\mathcal{I}\subseteq[n]$ and for all $\theta\in\Theta$, where $\sum_{\theta\in\Theta}\mu_{i,0}(\theta)=1$ and $\mu_{i,0}(\theta)\in\mathbb{R}_{\ge0}$ for all $\theta\in\Theta$. We then see from \eqref{eqn:nonbayesian rule 1} that $\sum_{\theta\in\Theta}\mu^{\mathcal{I}}_{i,k}(\theta)=1$ and $0\le\mu_{i,k}^{\mathcal{I}}(\theta)\le1$ for all $k\in\mathbb{Z}_{\ge0}$, for all $\theta\in\Theta$ and for all $\mathcal{I}\subseteq[n]$. Moreover, for a given true state $\theta\in\Theta$, we define $F_{\theta}(i)=\{\theta_p\in\Theta:\ \ell_i(s_i|\theta_p)=\ell_i(s_i|\theta),\forall s_i\in S_i\}$ for all $i\in[n]$. Similarly to Section~\ref{sec:problem formulation for Bayesian}, we denote $F_{\theta}(\mathcal{I})=\bigcap_{n_i\in\mathcal{I}} F_{\theta}(n_i)$, where we also assume that Assumption~$\ref{assump:likelihood function}$ holds for the analysis in this section. Again, note that $F_{\theta}(\emptyset)=\Theta$, and $\theta\in F_{\theta}(\mathcal{I})$ for all $\theta\in\Theta$ and for all $\mathcal{I}\subseteq [n]$.  We have the following result.
\begin{lemma}
\label{lemma:convergence of nonbayesian rule}
Consider a set $[n]$ of agents interacting over a strongly connected graph $\mathcal{G}=([n],\mathcal{E})$.\footnote{A directed graph $\mathcal{G}=([n],\mathcal{E})$ is said to be strongly connected if for each pair of distinct vertices $i,j\in[n]$, there exists a directed path (i.e., a sequence of directed edges) from $j$ to $i$.}  Suppose the true state of the world is $\theta^*$, $\mu_{i,0}(\theta)>0$ for all $i\in[n]$ and for all $\theta\in\Theta$, and $a_{ii}>0$ for all $i\in[n]$ in the rule given in \eqref{eqn:nonbayesian rule 1}. For any $\mathcal{I}\in [n]$, the rule given in \eqref{eqn:nonbayesian rule 1} ensures that (a) $\mathop{\lim}_{k\to\infty}\mu^{\mathcal{I}}_{i,k}(\theta_p)=0$ a.s. for all $\theta_p\notin F_{\theta^*}(\mathcal{I})$ and for all $i\in[n]$; and (b) $\lim_{k\to\infty}\mu^{\mathcal{I}}_{i,k}(\theta_q)=\frac{\prod_{j=1}^n\mu_{j,0}(\theta_q)^{\pi_j}}{\sum_{\theta\in F_{\theta^*}(\mathcal{I})}\prod_{j=1}^n\mu_{j,0}(\theta)^{\pi_j}}$ a.s. for all $i\in[n]$ and $\theta_q\in\ F_{\theta^*}(\mathcal{I})$, where $\pi\triangleq\begin{bmatrix}\pi_1 & \cdots & \pi_n\end{bmatrix}^{\prime}$ satisfies $\pi^{\prime}A=\pi^{\prime}$ and $\|\pi\|_1=1$, and $A\in\mathbb{R}^{n\times n}$ is defined such that $A_{ij}=a_{ij}$ for all $i,j\in[n]$. 
\end{lemma}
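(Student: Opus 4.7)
The plan is to reduce the analysis to a linear recursion in log-belief space, apply the strong law of large numbers together with convergence of powers of the stochastic matrix $A$, and then renormalize. Specifically, for fixed distinct states $\theta_p,\theta_q\in\Theta$ and any $\mathcal{I}\subseteq[n]$, define the log-belief ratios $\psi_{i,k}(\theta_p,\theta_q)\triangleq\log\frac{\mu_{i,k}^{\mathcal{I}}(\theta_p)}{\mu_{i,k}^{\mathcal{I}}(\theta_q)}$, and stack them as a vector $\psi_k\in\mathbb{R}^n$. Taking logs of \eqref{eqn:nonbayesian rule 1} cancels the (state-independent) normalizer and yields the linear recursion $\psi_{k+1}=A\psi_k+L_{k+1}$, where $L_{k+1,i}=\log\frac{\ell_i(\omega_{i,k+1}|\theta_p)}{\ell_i(\omega_{i,k+1}|\theta_q)}$. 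Unrolling gives $\psi_k=A^k\psi_0+\sum_{t=1}^k A^{k-t}L_t$.

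Because $\mathcal{G}$ is strongly connected and $a_{ii}>0$ for all $i$, the row-stochastic matrix $A$ is primitive, so $A^k\to\mathbf{1}\pi'$ entrywise with $\pi$ the left Perron eigenvector normalized by $\|\pi\|_1=1$, and $\pi_j>0$ for all $j$. Hence the deterministic term $\frac{1}{k}A^k\psi_0\to0$. For the stochastic term, one combines the convergence $A^{k-t}\to\mathbf{1}\pi'$ with the i.i.d. assumption on $\{\omega_{i,t}\}_{t\ge1}$ (Assumption~\ref{assump:likelihood function}) and the strong law of large numbers (with an approximation argument in the spirit of the proof of Lemma~1 in \cite{mitra2020new}) to obtain $\frac{1}{k}\sum_{t=1}^kA^{k-t}L_t\to\mathbf{1}\pi'\mathbb{E}[L_1]$ almost surely. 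The $i$-th component of the limit is
\begin{equation*}
\pi'\mathbb{E}[L_1]=\sum_{j\in[n]}\pi_j\,\mathbb{E}_{\theta^*}\!\log\frac{\ell_j(\omega_{j,1}|\theta_p)}{\ell_j(\omega_{j,1}|\theta_q)}=\sum_{j\in[n]}\pi_j\bigl(D_{KL}(\ell_j(\cdot|\theta^*)\|\ell_j(\cdot|\theta_q))-D_{KL}(\ell_j(\cdot|\theta^*)\|\ell_j(\cdot|\theta_p))\bigr).
\end{equation*}
Recall that for $j\notin\mathcal{I}$, $\ell_j(\cdot|\theta)$ is constant in $\theta$ by the convention stated after \eqref{eqn:nonbayesian rule 1}, so only indices $j\in\mathcal{I}$ contribute, and the characterization of $F_{\theta^*}(j)$ via vanishing KL divergence applies.

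To prove part (a), fix $\theta_p\notin F_{\theta^*}(\mathcal{I})=\bigcap_{j\in\mathcal{I}}F_{\theta^*}(j)$ and set $\theta_q=\theta^*$. Then for every $j\in\mathcal{I}$, $D_{KL}(\ell_j(\cdot|\theta^*)\|\ell_j(\cdot|\theta^*))=0$, while by definition of $F_{\theta^*}(j)$ there is at least one $j^\star\in\mathcal{I}$ with $D_{KL}(\ell_{j^\star}(\cdot|\theta^*)\|\ell_{j^\star}(\cdot|\theta_p))>0$. Since $\pi_{j^\star}>0$, the limit $\pi'\mathbb{E}[L_1]$ is strictly negative, so $\psi_{i,k}(\theta_p,\theta^*)\to-\infty$ a.s. for every $i\in[n]$, giving $\mu_{i,k}^{\mathcal{I}}(\theta_p)/\mu_{i,k}^{\mathcal{I}}(\theta^*)\to0$; combined with $\mu_{i,k}^{\mathcal{I}}(\theta^*)\le1$ this yields $\mu_{i,k}^{\mathcal{I}}(\theta_p)\to0$. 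For part (b), take both $\theta_p,\theta_q\in F_{\theta^*}(\mathcal{I})$. Then for every $j\in\mathcal{I}$, $\theta_p,\theta_q\in F_{\theta^*}(j)$ implies $\ell_j(s_j|\theta_p)=\ell_j(s_j|\theta_q)=\ell_j(s_j|\theta^*)$ for all $s_j\in S_j$, so $L_{t,j}\equiv0$; for $j\notin\mathcal{I}$, $L_{t,j}\equiv0$ by the same convention. Hence the stochastic term vanishes identically, and $\psi_k=A^k\psi_0\to\mathbf{1}\pi'\psi_0$, with $\pi'\psi_0=\log\frac{\prod_{j=1}^n\mu_{j,0}(\theta_p)^{\pi_j}}{\prod_{j=1}^n\mu_{j,0}(\theta_q)^{\pi_j}}$. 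Combining this with part (a) (which forces $\sum_{\theta\in F_{\theta^*}(\mathcal{I})}\mu_{i,k}^{\mathcal{I}}(\theta)\to1$) and solving the resulting system of ratios by the normalization $\sum_\theta\mu_{i,k}^{\mathcal{I}}(\theta)=1$ yields the claimed closed form in part (b).

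The main obstacle I anticipate is the almost-sure passage to the limit in $\frac{1}{k}\sum_{t=1}^k A^{k-t}L_t$: this is not a plain SLLN, since both the matrix weights and the number of summands depend on $k$. The standard remedy is to split the sum at a slowly growing index $k-k^{\alpha}$, use uniform boundedness of $A^s-\mathbf{1}\pi'$ (which decays geometrically by primitivity) to control the tail, and apply the SLLN to the bulk term $\frac{1}{k}\mathbf{1}\pi'\sum_{t=1}^{k-k^\alpha}L_t$; this is exactly the template referenced in \cite{mitra2020new} and \cite{nedic2017fast}, and it requires the bounded-log-likelihood-ratios consequence of $\ell_i(s_i|\theta)>0$ on the finite signal spaces $S_i$ from Assumption~\ref{assump:likelihood function}.
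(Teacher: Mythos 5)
Your proposal is correct and follows essentially the same route as the paper: both work with log-belief ratios satisfying the linear recursion driven by $A$ and the log-likelihood ratios, establish part (a) via the negative-drift/SLLN argument (which the paper delegates to the arguments of Theorem 1 in \cite{nedic2017fast} and you spell out via the split-sum technique), and prove part (b) by noting that the stochastic forcing term vanishes identically for states in $F_{\theta^*}(\mathcal{I})$ so that $A^k\to\mathbf{1}\pi'$ yields the limiting ratios, which are then normalized. The only difference is presentational: you parameterize ratios by an arbitrary pair of states rather than always referencing $\theta^*$, and you give more detail on the almost-sure limit of the weighted time average.
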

\begin{proof}
We begin by defining the following quantities for all $\mathcal{I}\subseteq[n]$, for all $i\in[n]$ and for all $k\in\mathbb{Z}_{\ge0}$:
\begin{equation}
\label{eqn:def of two tool ratios}
\delta^{\mathcal{I}}_{i,k}(\theta)\triangleq \ln\frac{\mu^{\mathcal{I}}_{i,k}(\theta)}{\mu^{\mathcal{I}}_{i,k}(\theta^*)}\ \text{and}\ \sigma_{i,k+1}(\theta) \triangleq \ln\frac{\ell_i(\omega_{i,k+1}|\theta)}{\ell_i(\omega_{i,k+1}|\theta^*)},
\end{equation}
where $\delta_{i,0}^{\mathcal{I}}(\theta)=\delta_{i,0}(\theta)\triangleq\ln\frac{\mu_{i,0}(\theta)}{\mu_{i,0}(\theta^*)}$ for all $\mathcal{I}\subseteq[n]$. For any  $\mathcal{I}\subseteq[n]$, we consider an agent $i\in[n]$ and $\theta_p\notin F_{\theta^*}(\mathcal{I})$. Following similar arguments to those for Theorem~$1$ in \cite{nedic2017fast}, one can obtain that $\lim_{k\to\infty}\delta_{i,k}^{\mathcal{I}}(\theta_p)=-\infty$ a.s., i.e., $\lim_{k\to\infty}\frac{\mu^{\mathcal{I}}_{i,k}(\theta_p)}{\mu^{\mathcal{I}}_{i,k}(\theta^*)}=0$ a.s. Since $0\le\mu^{\mathcal{I}}_{i,k}(\theta)\le1$ for all $\theta\in\Theta$ and for all $k\in\mathbb{Z}_{\ge0}$, it follows that $\lim_{k\to\infty}\frac{\mu^{\mathcal{I}}_{i,k}(\theta_p)}{\mu^{\mathcal{I}}_{i,k}(\theta^*)}\ge\lim_{k\to\infty}\mu^{\mathcal{I}}_{i,k}(\theta_p)\ge0$, which implies $0\le\lim_{k\to\infty}\mu^{\mathcal{I}}_{i,k}(\theta_p)\le0$ a.s., i.e., $\lim_{k\to\infty}\mu^{\mathcal{I}}_{i,k}(\theta_p)=0$ a.s. This proves part (a). 

We then prove part (b). For any  $\mathcal{I}\subseteq[n]$, we now consider an agent $i\in[n]$ and $\theta_q\in F_{\theta^*}(\mathcal{I})$. Based on the definition of $F_{\theta^*}(\mathcal{I})$, we note that $\sigma_{i,k+1}(\theta_q)=0, \forall k\in\mathbb{Z}_{\ge0}$. We then obtain from \eqref{eqn:nonbayesian rule 1} the following:
\begin{equation*}
\delta^{\mathcal{I}}_{k+1}(\theta_q)=A\delta^{\mathcal{I}}_k(\theta_q),
\end{equation*}
where $\delta^{\mathcal{I}}_{k}(\theta_q)\triangleq\begin{bmatrix}\delta^{\mathcal{I}}_{1,k}(\theta_q) & \cdots & \delta^{\mathcal{I}}_{n,k}(\theta_q)\end{bmatrix}^{\prime}$. Moreover, we have
\begin{equation} 
\label{eqn:limiting_dist}
\lim_{k\to\infty}\delta^{\mathcal{I}}_{k}(\theta_q)=(\lim_{k\to\infty} A^k ) \delta_{0}(\theta_q)=\mathbf{1}_n\pi^{\prime}\delta_{0}(\theta_q),
\end{equation}
where the last equality follows from the fact that $A$ is an irreducible and aperiodic stochastic matrix based on the hypotheses of the lemma. Simplifying \eqref{eqn:limiting_dist}, we obtain
\begin{equation}
\label{eqn:balance_eqn}
\lim_{k\to\infty}\frac{\mu^{\mathcal{I}}_{i,k}(\theta_q)}{\mu^{\mathcal{I}}_{i,k}(\theta^*)}=\frac{\prod_{j=1}^n\mu_{j,0}(\theta_q)^{\pi_j}}{\prod_{j=1}^n\mu_{j,0}(\theta^*)^{\pi_j}}>0.
\end{equation}
Summing up Eq.~\eqref{eqn:balance_eqn} for all $\theta_q\in F_{\theta^*}(\mathcal{I})$, we have
\begin{equation}
\label{eqn:balance_eqn sum}
\lim_{k\to\infty}\frac{\sum_{\theta_q\in F_{\theta^*}(\mathcal{I})}\mu^{\mathcal{I}}_{i,k}(\theta_q)}{\mu^{\mathcal{I}}_{i,k}(\theta^*)}=\sum_{\theta_q\in F_{\theta^*}(\mathcal{I})}\frac{\prod_{j=1}^n\mu_{j,0}(\theta_q)^{\pi_j}}{\prod_{j=1}^n\mu_{j,0}(\theta^*)^{\pi_j}}>0.
\end{equation}
Noting from part (a) that $\lim_{k\to\infty}\sum_{\theta_q\in\ F_{\theta^*}(\mathcal{I})}\mu^{\mathcal{I}}_{i,k}(\theta_q)=1$ a.s., we see from \eqref{eqn:balance_eqn sum} that $\lim_{k\to\infty}\mu^{\mathcal{I}}_{i,k}(\theta^*)$ exists and is positive, a.s., which further implies via \eqref{eqn:balance_eqn} that $\lim_{k\to\infty}\mu^{\mathcal{I}}_{i,k}(\theta_q)$ exists and is positive, a.s. In other words, we have from  \eqref{eqn:balance_eqn} the following:
\begin{equation}
\label{eqn:balance_eqn_limit}
\frac{\mu^{\mathcal{I}}_{i,\infty}(\theta_q)}{\mu^{\mathcal{I}}_{i,\infty}(\theta^*)}=\frac{\prod_{j=1}^n\mu_{j,0}(\theta_q)^{\pi_j}}{\prod_{j=1}^n\mu_{j,0}(\theta^*)^{\pi_j}},
\end{equation}
where $\mu^{\mathcal{I}}_{i,\infty}(\theta_q)\triangleq\lim_{k\to\infty}\mu^{\mathcal{I}}_{i,k}(\theta_q)$ for all $\theta_q\in F_{\theta^*}(\mathcal{I})$. Again noting that $\lim_{k\to\infty}\sum_{\theta\in\ F_{\theta^*}(\mathcal{I})}\mu^{\mathcal{I}}_{i,k}(\theta)=1$ a.s. for all $i\in[n]$, part (b) then follows from Eq.~\eqref{eqn:balance_eqn_limit}.
\end{proof}

Similarly to the problem formulation described in Section \ref{sec:problem formulation for Bayesian}, we define the following error metric for the designer:
\begin{equation*}
\bar{e}_{\theta^*}(\mathcal{I})=\sum_{i=1}^n e_{\theta^*,i}(\mathcal{I}),
\end{equation*}
where $\theta^*$ is the true state, $e_{\theta^*,i}(\mathcal{I})\triangleq\frac{1}{2}\mathop{\lim}_{k\to\infty} \|\mu^{\mathcal{I}}_{i,k} - \mathbf{1}_{\theta^*}\|_1$ and $\mu^{\mathcal{I}}_{i,k}\triangleq\begin{bmatrix}\mu^{\mathcal{I}}_{i,k}(\theta_1) & \cdots & \mu^{\mathcal{I}}_{i,k}(\theta_m)\end{bmatrix}^{\prime}$. In words, $\bar{e}_{\theta^*}(\mathcal{I})$ is the sum of the steady-state learning errors of all the agents in $[n]$, when the true state of the world is assumed to be $\theta^*$. It then follows from Lemma $\ref{lemma:convergence of nonbayesian rule}$ that $\bar{e}_{\theta^*}(\mathcal{I})=n\big(1-\frac{\prod_{j=1}^n(\mu_{j,0}(\theta^*))^{\pi_j}}{\sum_{\theta\in F_{\theta^*}(\mathcal{I})}\prod_{j=1}^n(\mu_{j,0}(\theta))^{\pi_j}}\big)$ almost surely. Denoting 
\begin{equation}
\label{eqn:error of nonbayesian case}
\bar{e}^s_{\theta_p}(\mathcal{I})\triangleq n\big(1-\frac{\prod_{j=1}^n(\mu_{j,0}(\theta_p))^{\pi_j}}{\sum_{\theta\in F_{\theta_p}(\mathcal{I})}\prod_{j=1}^n(\mu_{i,0}(\theta))^{\pi_j}}\big)\ \forall\theta_p\in\Theta,
\end{equation}
we consider the following problem for the designer:
\begin{equation}
\label{eqn:nonbayesian opt}
\begin{split}
&\mathop{\min}_{\mathcal{I}\subseteq [n]} h_{\mathcal{I}}\\
& s.t. \ \bar{e}^s_{\theta_p}(\mathcal{I})\le \bar{R}_{\theta_p},\forall\theta_p\in\Theta,
\end{split}
\end{equation}
where $0\le \bar{R}_{\theta_p}\le n$ and $\bar{R}_{\theta_p}\in\mathbb{R}$. Denoting $\bar{\mu}_0(\theta)\triangleq\prod_{i=1}^n\mu_{i,0}(\theta)^{\pi_i}$ for all $\theta\in\Theta$, we have from \eqref{eqn:error of nonbayesian case}: 
\begin{equation}
\label{eqn:error of nonbayesian case 2}
\bar{e}_{\theta_p}^s(\mathcal{I})=n(1-\frac{\bar{\mu}_0(\theta_p)}{\sum_{\theta\in F_{\theta_p}(\mathcal{I})}\bar{\mu}_0(\theta)}),\forall \theta_p\in\Theta.
\end{equation}
Now, we have from \eqref{eqn:error of bayesian case} and \eqref{eqn:error of nonbayesian case 2} that the optimization problem \eqref{eqn:nonbayesian opt} can be viewed as an instance of Problem $\ref{problem:SSL}$. Thus, all the theoretical results derived in this paper apply to this non-Bayesian distributed setting as well.

\end{document}